\documentclass{article}

\usepackage{microtype}
\usepackage{graphicx}
\usepackage{subcaption}
\usepackage{booktabs} 

\usepackage{hyperref}

\usepackage[accepted]{icml2026}

\usepackage{amsmath}
\usepackage{amssymb}
\usepackage{mathtools}
\usepackage{amsthm}

\usepackage[capitalize,noabbrev]{cleveref}

\theoremstyle{plain}
\newtheorem{theorem}{Theorem}
\newtheorem{proposition}[theorem]{Proposition}

\theoremstyle{definition}
\newtheorem{definition}[theorem]{Definition}

\theoremstyle{remark}
\newtheorem{remark}[theorem]{Remark}

\usepackage[disable,textsize=tiny]{todonotes}

\usepackage[utf8]{inputenc}
\usepackage[T1]{fontenc}
\usepackage{enumitem}   
\usepackage{float}      
\usepackage{multirow} 
\usepackage{makecell}

\usepackage[most]{tcolorbox}
\usepackage{tabularx}
\usepackage{arydshln}
\definecolor{Mycolor1}{HTML}{EF9A9A}
\definecolor{Mycolor2}{HTML}{90CAF9}
\definecolor{Mycolor3}{HTML}{AED581}
\definecolor{mygray}{gray}{0.9}

\icmltitlerunning{Measuring and Mitigating Post-Hoc Rationalization in Reverse Chain-of-Thought Generation}

\begin{document}
\twocolumn[
  \icmltitle{
  Measuring and Mitigating Post-Hoc Rationalization \\in Reverse Chain-of-Thought Generation}



  \icmlsetsymbol{intern}{\ensuremath{\dagger}}

    \begin{icmlauthorlist}
      \icmlauthor{Guangyue Peng}{pku,intern}
      \icmlauthor{Zongchao Chen}{boss}
      \icmlauthor{Wen Luo}{pku}
      \icmlauthor{Yuntao Wen}{uestc}
      \icmlauthor{Wei Li}{pku}
      \icmlauthor{Ruixiang Feng}{uestc}
      \icmlauthor{Ran Le}{boss}
      \icmlauthor{Chen Yang}{boss}
      \icmlauthor{Zhenwei An}{boss}
      \icmlauthor{Yang Song}{boss}
      \icmlauthor{Tao Zhang}{boss}
      \icmlauthor{Houfeng Wang}{pku}
    \end{icmlauthorlist}
    
    \icmlaffiliation{pku}{State Key Laboratory of Multimedia Information Processing, School of Computer Science, Peking University}
    \icmlaffiliation{uestc}{University of Electronic Science and Technology of China}
    \icmlaffiliation{boss}{Nanbeige Lab, BOSS Zhipin}

  \icmlcorrespondingauthor{Yang Song}{songyang@kanzhun.com}
  \icmlcorrespondingauthor{Houfeng Wang}{wanghf@pku.edu.cn}

  \icmlkeywords{Machine Learning, ICML}

  \vskip 0.3in
]




\printAffiliationsAndNotice{
  \textsuperscript{\ensuremath{\dagger}}Work done during internship at Nanbeige Lab.
}

\begin{abstract}

Reverse Chain-of-Thought Generation (RCG) synthesizes reasoning traces from query-answer pairs, but answer-visible generation can justify a pre-committed answer rather than derive it. This post-hoc rationalization creates a train-inference mismatch because student models are trained on answer-conditioned traces but must reason without answer access at inference time. We quantify this mismatch through lexical, trajectory, and probabilistic anchoring, measuring surface overlap, answer-conditioned generation dynamics, and answer recoverability from the trace, respectively. We find that semantic suppression, a seemingly intuitive mitigation, reduces lexical overlap but increases trajectory anchoring: avoiding the answer requires continually tracking it, thereby strengthening its influence on generation. We therefore propose Structural Skeleton-guided Reasoning (SSR), which replaces suppression with structural decoupling by first generating an abstract functional skeleton and then using it to guide the full reasoning trace. Anchoring analyses show that SSR reduces all three forms of answer dependence. Across in-domain and out-of-distribution benchmarks, its distilled variant, SSR-D, improves performance by up to 10 points over suppression baselines and better preserves out-of-distribution performance. Code is available at \url{https://github.com/viniferagy/SSR}.

\end{abstract}

\section{Introduction}

\begin{figure}[t]
\centering
\includegraphics[width=0.98\linewidth]{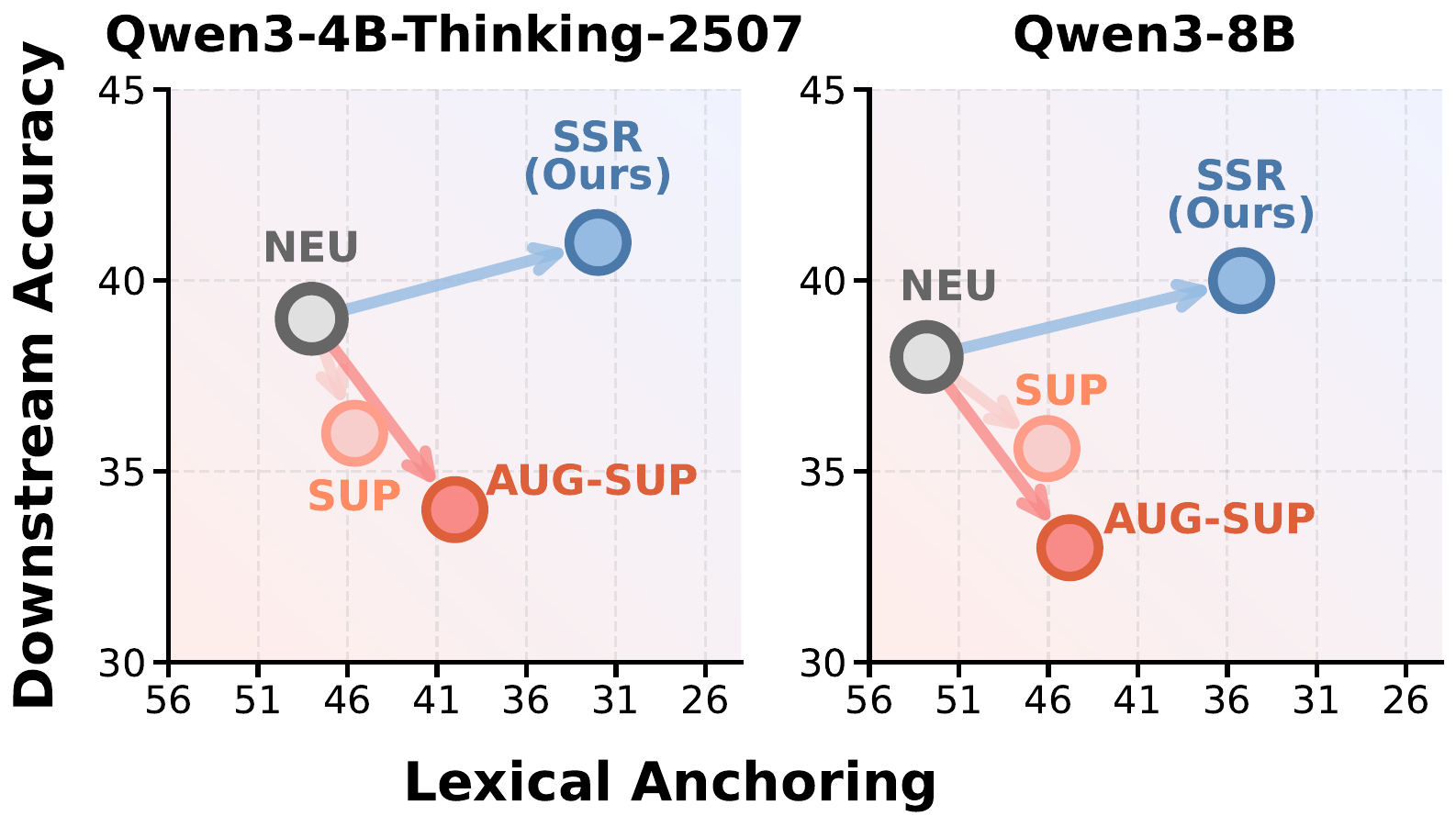}
\caption{Analysis of the relationship between \textit{Lexical Anchoring} and \textit{Downstream Accuracy}  (from \cref{tab:main_results}). The plot reveals that lexical anchoring is a poor indicator of model utility, as semantic suppression methods (SUP/AUG-SUP) tend to "deceive" this metric, obtaining smaller anchoring while simultaneously suffering from a drop in actual downstream performance. In contrast, our SSR method achieves improvements in both dimensions.}
\label{fig:intuition}
\end{figure}

The effectiveness of Large Language Models (LLMs) in complex reasoning tasks depends critically on the quality of intermediate reasoning traces \citep{wei2022chain, kojima2022large, chu2024navigate}. While expert-verified query-answer pairs are abundant \citep{cobbe2021training, hendrycks2021measuring}, the scarcity of corresponding step-by-step derivations creates a significant bottleneck for acquiring and transferring reasoning capabilities. Reverse Chain-of-Thought Generation (RCG) addresses this gap by synthesizing intermediate reasoning steps that logically bridge a query to a known answer \citep{bhagavatula2020abductive, zelikman2022star, li2025from}.

However, RCG is susceptible to post-hoc rationalization \citep{cox2025posthoc,jin2026mirage}. When the answer is visible during generation, models tend to rationalize backward from the conclusion rather than genuinely derive it \citep{turpin2023language,lanham2023measuring,lewis2025analysing}. Importantly, post-hoc rationalization does not necessarily degrade the surface quality or final accuracy of responses \citep{bentham2024chain}. Rather, it undermines the reliability and utility of the reasoning traces themselves \citep{agarwal2024faithfulness,paul2024making}. Because the model has committed to the response from the outset, this pre-determined response serves as a cognitive anchor that shapes the entire explanation \citep{bao2025how}. The resulting chain-of-thought becomes less logically self-contained. An observer presented only with the query, unaware of the anchored response, would find the reasoning less accessible and coherent \citep{madaan2023makes,arcuschin2025cot}. This anchoring effect weakens the utility of generated traces for explainability, faithfulness verification, reasoning distillation, and reliable measurement of out-of-distribution generalization \citep{chua2024bias,david2025temporal,sakana2025rlt}.

Crucially, post-hoc rationalization introduces a structural train-inference mismatch: reasoning traces are generated under $P(R \mid Q, A)$ with full answer visibility, but distillation aims to equip students who must reason under $P(R \mid Q)$ without answer access. The degree of PHR directly determines the divergence between these two distributions. Students trained on high-PHR traces learn a ``destination-aware'' reasoning style--linear, confident, and non-exploratory--that fails when the destination is unknown. This failure manifests most severely in out-of-distribution generalization, where pattern-matching to training examples cannot compensate for the mismatch.

To quantify and mitigate this mismatch, we propose a three-level hierarchy of anchoring metrics \citep{lanham2023measuring,paul2024making,bentham2024chain}. Beyond surface-level lexical anchoring, which simply measures token overlap between the trace and the anchored response, we examine deeper generation dynamics of the reverse chain-of-thought. We introduce trajectory anchoring to measure the per-token influence of answer visibility on the generation process, and probabilistic anchoring to measure the total information transmission from the reasoning trace to the anchored response.

We apply our measurement framework to evaluate mitigation strategies for post-hoc rationalization \citep{tanneru2024hardness}. The intuitive baseline is \textit{semantic suppression}: explicitly prompting models to ignore the given response or suppress indicators of pre-determination \citep{sakana2025rlt, wang2025reverse}. Although widely adopted as an intuitive remedy to successfully relieve the lexical anchoring, our analysis reveals that suppression fails on internal anchoring metrics. While it masks lexical overlaps, it increases trajectory anchoring--the generation process becomes more dependent on the answer the model is trying to suppress--while leaving total information mismatch largely unresolved, ultimately degrading the quality of reasoning traces and hurting downstream task performance (\cref{fig:intuition}). The model rationalizes more subtly, but more heavily. We connect this paradox with Ironic Process Theory \citep{wegner1994ironic} from cognitive psychology: instructing a model to ignore an answer forces it to actively monitor for exclusion, thereby deepening the specific dependence it aims to cut off (illustrated in \cref{fig:intro} (b)).

To break this cycle, we propose \textbf{Structural Skeleton-guided Reasoning (SSR)}. Rather than suppressing the answer, SSR decouples the structure of reasoning from its content. Generation proceeds in two phases: (1) synthesizing a "skeleton" of functional tags (e.g., \texttt{PLAN} $\rightarrow$ \texttt{INFR}) that extracts the coarse reasoning structure behind the response while discouraging direct encoding of specific response content, and (2) using this skeleton to guide generation of the full reasoning chain. By providing a content-neutral structural target, SSR reduces anchoring across the three-level hierarchy without relying on explicit answer-suppression instructions. Because prompted SSR can still be fragile, with models omitting tags or leaking answer details into skeletons, we further introduce \textbf{Distilled SSR (SSR-D)}. SSR-D first trains an SSR-format teacher from constructed skeleton-reasoning pairs, then uses this teacher to generate distilled SSR traces for the target model. The target model learns to generate a valid skeleton and then reconstruct the full reasoning from it, turning SSR from a prompting procedure into an internalized generation pattern.

Our contributions are:
\begin{enumerate}
\item \textbf{Anchoring Measurement:} We propose a three-level framework (lexical, trajectory, probabilistic) to quantify the train-inference mismatch induced by answer visibility in reverse chain-of-thought generation. Each level is answer-conditional, capturing progressively deeper forms of answer dependence.
\item \textbf{Mechanism Analysis:} We demonstrate that semantic suppression, the intuitive mitigation strategy for post-hoc rationalization, fails due to an "ironic process" that paradoxically strengthens anchoring.
\item \textbf{Methodology:} We introduce SSR to mitigate post-hoc rationalization by decoupling reasoning structure from anchored content, and further propose SSR-D as a distillation variant that trains target models on SSR traces generated by a fine-tuned SSR teacher for more reliable structural alignment and stronger downstream gains.
\end{enumerate}

\begin{figure*}[t]
    \centering
    \includegraphics[width=0.88\textwidth]{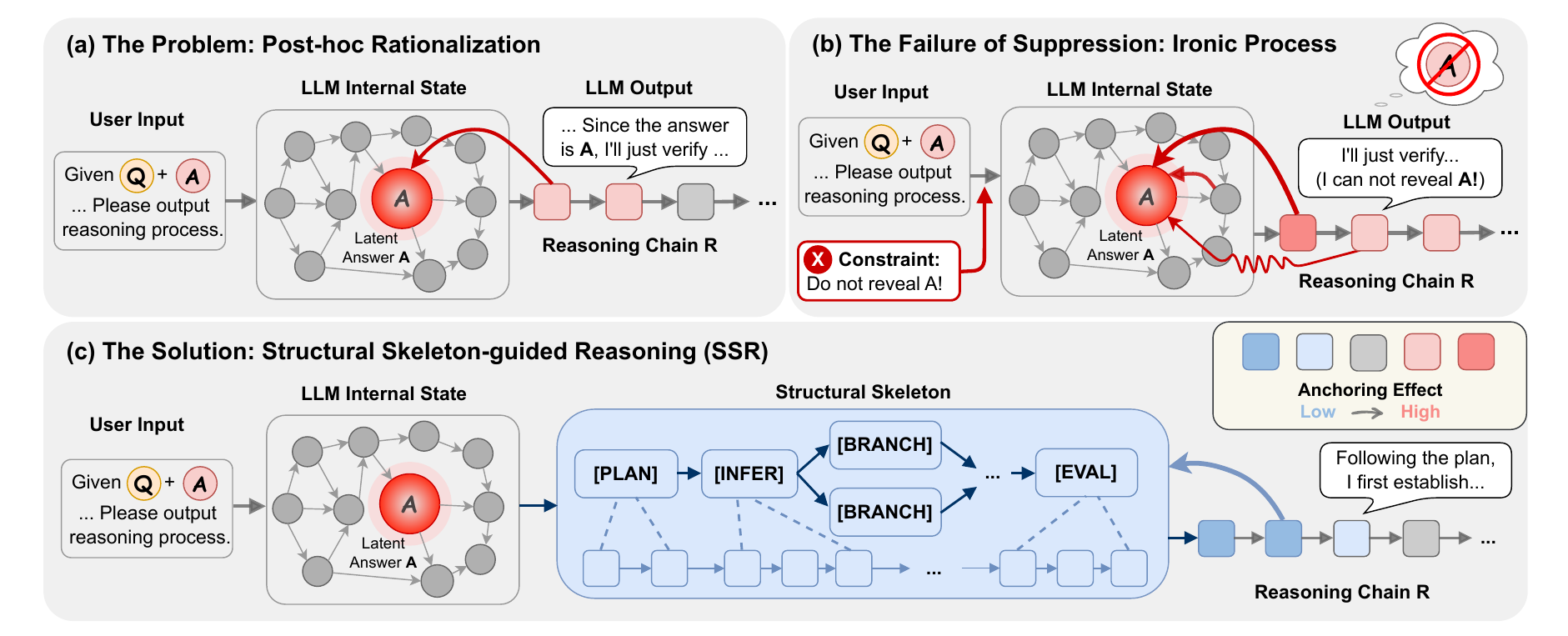}
    \caption{\textbf{Paradigms in Reverse Chain-of-Thought Generation.} The curved red and blue arrows indicate the anchoring effect of the pre-committed answer. (a) Post-hoc Rationalization: Visible answers cause shortcutting, resulting in \colorbox{Mycolor1!80}{rationalized} reasoning chains. (b) Suppression Failure: Negative constraints trigger "ironic monitoring" of the forbidden answer, paradoxically maintaining answer-steered generation trajectories and highly \colorbox{Mycolor1!80}{rationalized} chains. (c) Structural Skeleton-guided Reasoning (SSR): Decoupling content via an abstract skeleton redirects generation toward structural planning rather than answer monitoring, producing \colorbox{Mycolor2!80}{unanchored} chains driven by structural information rather than answer dependency.}
    \label{fig:intro}
\end{figure*}
\section{Anchoring Measurement}

The core source of train-inference mismatch in RCG is the dependence of $R$ on the answer $A$. At training time, $A$ is visible; at inference time, it is not. The more information about $A$ that is encoded in $R$--whether through surface tokens, generation-process dynamics, or latent statistical structure--the larger the gap between the training distribution $P(R \mid Q,A)$ and the inference distribution $P(R \mid Q)$. We quantify this dependence through three levels of anchoring metrics, ordered from observable surface features to latent information-theoretic properties.

We define \emph{Reverse Chain-of-Thought Generation} (RCG) as follows: given a query $Q$ and a pre-committed response $A$, generate a reasoning chain
\[
R = (r_1, r_2, \ldots, r_T)
\]
such that $R$ constitutes a coherent derivation from $Q$ to $A$. Unlike standard chain-of-thought prompting where the answer emerges from reasoning, RCG constructs explanatory traces for predetermined conclusions.

The central question in evaluating RCG is: \emph{To what extent does the visible answer $A$ create a train-inference mismatch, shaping the reasoning trace $R$ in ways that depend on information unavailable at inference time?}
We formalize this through a hierarchy of three metrics, ordered from observable surface features to latent information-theoretic properties.

\subsection{Lexical Anchoring ($\mathcal{A}_{\textup{lex}}$)}

\begin{definition}
Lexical anchoring captures the appearance of response-specific content within the reasoning chain. Because many answer words are already licensed by the query itself, we use a question-filtered IDF-weighted recall over answer content tokens:
\begin{equation}
\mathcal{C}_{Q}(A)=\mathcal{C}(A)\setminus\mathcal{C}(Q),
\end{equation}
\begin{equation}
\mathcal{A}_{\textup{lex}}
=
\frac{
\sum_{w \in \mathcal{C}_{Q}(A)}
\mathrm{idf}(w)\min(\mathrm{tf}_R(w), \mathrm{tf}_A(w))
}{
\sum_{w \in \mathcal{C}_{Q}(A)}
\mathrm{idf}(w)\mathrm{tf}_A(w)
}.
\end{equation}
Here $\mathcal{C}(X)$ denotes the multiset of non-stopword content tokens in text $X$, $\mathcal{C}_{Q}(A)$ removes answer content tokens that already appear in the query, $\mathrm{tf}_X(w)$ is the token frequency of $w$ in text $X$, and $\mathrm{idf}(w)$ is estimated from the response corpus. If the denominator is zero, the score is defined as zero.
\end{definition}

While $\mathcal{A}_{\textup{lex}}$ focuses on answer-specific content words rather than function-word overlap, it remains a surface-level indicator. It is therefore an insufficient proxy for evaluating the quality of generated reasoning. As in \cref{fig:intuition}, reducing $\mathcal{A}_{\textup{lex}}$ does not guarantee improved downstream performance, because models may simply learn to obscure the same underlying dependence.

\subsection{Trajectory Anchoring ($\mathcal{A}_{\textup{traj}}$)}

Beyond lexical artifacts, answer dependence manifests in the generation process itself. When the model can see the answer, its per-token predictions become more confident: the answer resolves uncertainty about what to write next. This confidence gap between answer-visible and answer-blind generation directly measures how much the generation \emph{trajectory} is steered by the answer.

\begin{definition}
For each token position $t$ in the reasoning trace $R$, let $H(p(\cdot \mid C))$ denote the entropy of the next-token predictive distribution under context $C$. Trajectory anchoring is the average entropy reduction attributable to answer visibility:
\begin{equation}
\mathcal{A}_{\textup{traj}} =
\frac{1}{|R|}
\sum_{t=1}^{|R|}
\left[
H\bigl(p(\cdot \mid Q, R_{[:t]})\bigr)
-
H\bigl(p(\cdot \mid Q, A, R_{[:t]})\bigr)
\right].
\end{equation}
\end{definition}

A high $\mathcal{A}_{\textup{traj}}$ indicates that knowing the answer substantially reduces the model's uncertainty at each generation step: the generation trajectory is strongly tethered to the answer. This metric captures train-inference mismatch at the \emph{process} level. During RCG, the model generates with answer access (low entropy, high confidence); during inference, the student generates without it (high entropy, lower confidence). The per-token gap between these two regimes is precisely $\mathcal{A}_{\textup{traj}}$.

Unlike lexical anchoring, which detects surface-level leakage after generation, trajectory anchoring captures the ongoing influence of the answer \emph{during} generation. A trace may contain no answer tokens ($\mathcal{A}_{\textup{lex}} \approx 0$) yet exhibit high trajectory anchoring if the model silently consults the answer at every step to guide its generation path.

\subsection{Probabilistic Anchoring ($\mathcal{A}_{\textup{prob}}$)}

\begin{definition}
Probabilistic anchoring quantifies the extent to which the reasoning trace reduces uncertainty about the pre-committed response. Let
\begin{equation}
\ell_\theta(A \mid C) = \frac{1}{|A|}\log_2 P_\theta(A \mid C)
\end{equation}
be the average log-probability of the response under context $C$. We compute the raw probabilistic anchoring score as the clipped fraction of baseline answer surprisal removed by the full reasoning trace:
\begin{equation}
\mathcal{A}_{\textup{prob}}
=
\operatorname{clip}_{[0,1]}\!\left(
\frac{\ell_\theta(A \mid Q,R)-\ell_\theta(A \mid Q)}
{-\ell_\theta(A \mid Q)}
\right).
\end{equation}
\end{definition}

A high $\mathcal{A}_{\textup{prob}}$ indicates that the reasoning trace $R$ explains a large fraction of the model's initial uncertainty about $A$. While some predictive capability is expected from valid reasoning, a disproportionately high score suggests that the trace essentially encodes the response directly, serving as a compressed transmission channel for $A$. This metric captures the deepest level of train-inference mismatch: information about the answer that persists in the trace even when surface tokens and generation-process signatures are successfully masked. Together with $\mathcal{A}_{\textup{lex}}$ and $\mathcal{A}_{\textup{traj}}$, it completes a hierarchy from surface overlap to process dependence to total information transmission. For compact presentation, scalar result tables use the same reference-normalized display scores as the behavioral-zone plots; normalization details are given in Appendix~\ref{app:display_score_normalization}.

\section{Methodology}
\label{sec:method}

\subsection{Baselines}

\paragraph{Neutral Prompting (NEU).}
Our baseline employs standard chain-of-thought generation where the model is given both the query $Q$ and response $A$, then asked to produce a reasoning trace connecting them without additional constraints. This represents the default RCG setting and establishes reference anchoring levels against which mitigation strategies are compared.\footnote{Detailed prompts are provided in Appendix~\ref{app:prompt}.}

\paragraph{Semantic Suppression (SUP).}
A natural mitigation strategy instructs the model to conceal the response during generation: \emph{``Reason step by step, but do not reveal the answer until the end.''} \citep{wang2025reverse} We additionally evaluate an intensified variant (\textbf{AUG-SUP}) with stronger suppression instructions threatening the model not to disclose the relevant information \citep{xu2025bullying}. While intuitively appealing, we hypothesize and empirically demonstrate that this approach fails to reduce internal anchoring. Suppression preserves or amplifies response information within the reasoning dynamics despite successfully masking it in surface text, challenging the assumption that lexical concealment equates to genuine derivation.

The main text focuses on NEU, SUP, and AUG-SUP as the core comparison set; additional baseline variants and stress-test diagnostics are discussed in Appendix~\ref{app:extended_mismatch_metrics}.

\subsection{Structural Skeleton-guided Reasoning (SSR)}
We propose SSR as an alternative that shifts from \emph{suppression} to \emph{separation}. Instead of forbidding access to the response, SSR borrows the insight from the meta-reasoning paradigm \citep{wang2023plan,ning2024skeleton} and introduces an intermediate structural representation that decouples reasoning topology from specific semantic content in the anchored response.

We define a \textbf{Structural Skeleton}
\[
S = \langle (f_1, c_1), \ldots, (f_n, c_n) \rangle
\]
as a sequence of abstract steps, where each step comprises a \emph{functional tag} $f_i$ from a closed set and a \emph{content summary} $c_i$ describing the step's intent without revealing values (e.g., ``calculate the ratio'' rather than ``calculate $0.5$''). This abstraction is designed to encourage invariance to the specific response while preserving the logical derivation.\footnote{Appendix~\ref{app:ssr_theory} provides theoretical properties for SSR.}

We report component ablations and SSR prompt/rendering variants in Appendix~\ref{app:ablation}, while keeping the main methodology centered on the final natural-language SSR design.

\newtcolorbox{tagbox}[1]{%
  enhanced,
  colback=Mycolor2!40,
  colframe=black,
  boxrule=0.9pt,
  arc=6pt,
  left=6pt,right=6pt,top=7pt,bottom=6pt,
  fonttitle=\bfseries\small,
  title={#1},
  attach boxed title to top left={xshift=8pt,yshift*=-\tcboxedtitleheight/2},
  boxed title style={
    colback=black,
    colframe=black,
    arc=4pt,
    boxrule=0pt,
    left=6pt,right=6pt,top=2pt,bottom=2pt,
  },
  coltitle=white,
}

\begin{tagbox}{Functional Tags}
\footnotesize
\setlength{\tabcolsep}{2.5pt}
\renewcommand{\arraystretch}{0.92}

\begin{tabular*}{\linewidth}{@{\extracolsep{\fill}}cccc@{}}
\texttt{[PLAN]} & \texttt{[RETR]} & \texttt{[INFR]} & \texttt{[EVAL]}\\[0.5pt]
Planning      & Retrieval      & Inference      & Evaluation\\[1.0pt]
\texttt{[SUMM]} & \texttt{[BTRK]} & \texttt{[RFLX]} & \texttt{[BRCH]}\\[0.5pt]
Summary       & Backtrack      & Reflection     & Branch\\
\end{tabular*}

\end{tagbox}

SSR operates via two-phase generation:
\begin{enumerate}[leftmargin=*, itemsep=0pt]
    \item \textbf{Skeleton Generation:} The model generates a skeleton conditioned on query and response:
    \[
    S \sim P(S \mid Q, A).
    \]
    \item \textbf{Reasoning Generation:} The model generates the full reasoning trace guided by the skeleton:
    \[
    R \sim P(R \mid S, Q, A).
    \]
\end{enumerate}

Although $A$ remains visible in both stages, the skeleton $S$ acts as an intermediate structural target that can reduce direct response encoding during reasoning generation. By providing a content-neutral target, SSR redirects computation toward structural organization.

\begin{figure}[t]
  \centering
  \includegraphics[width=0.76\linewidth]{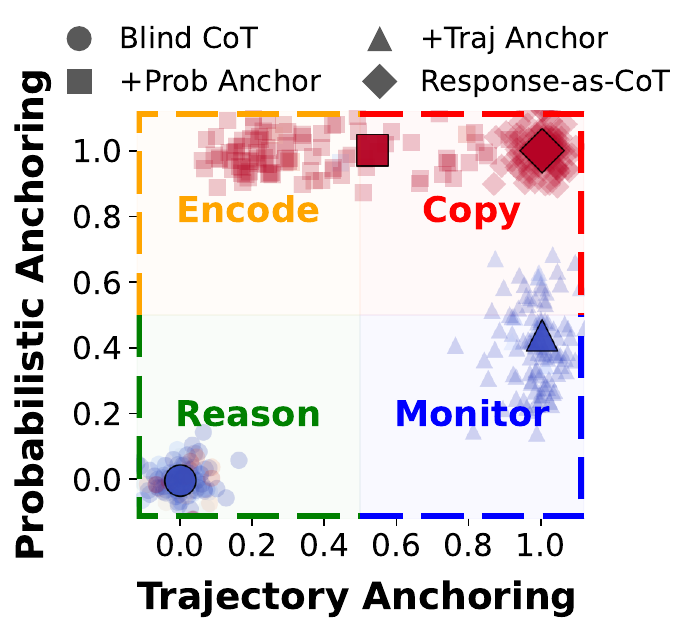}
  \caption{Controlled reference calibration for the behavioral-zone display. Blind CoT and Response-as-CoT use real model outputs as the low- and high-anchoring endpoints, while +Prob Anchor and +Traj Anchor are mechanically constructed control CoTs that separate probabilistic from trajectory anchoring.}
  \label{fig:behavioral_zones}
\end{figure}

\begin{figure*}[t]
  \centering
  \includegraphics[width=0.96\textwidth]{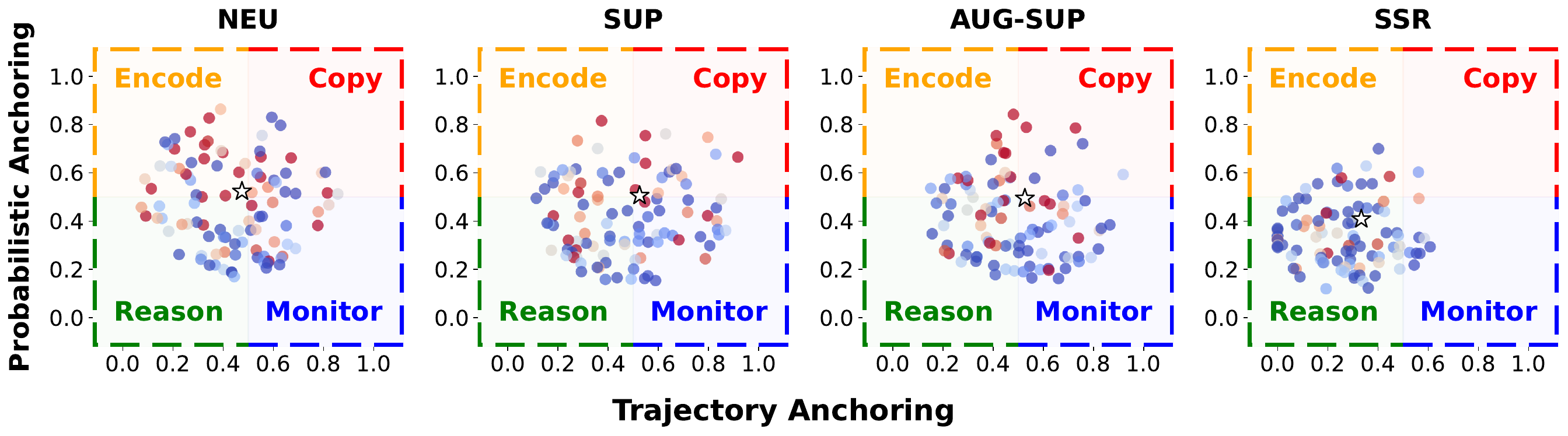}
  \caption{Mechanism diagnosis across generation strategies. Suppression (SUP, AUG-SUP) slightly reduces Reason-zone mass and increases Monitor/Copy behavior, while SSR concentrates more traces in the Reason zone and reduces pathological spread. Zone names follow the Trajectory--Probabilistic interpretation: Reason, Encode, Monitor, and Copy. Point color indicates lexical anchoring (\textcolor{blue}{blue}: low, \textcolor{red}{red}: high), and the white star in each panel marks the method-level centroid computed from mean trajectory and probabilistic anchoring.}
  \label{fig:mechanism_diagnosis}
\end{figure*}

\section{Experiments}

We evaluate strategies for mitigating post-hoc rationalization in reverse chain-of-thought generation. Our experimental framework measures the extent to which different prompting and generation approaches reduce anchoring effects while maintaining reasoning quality.

\subsection{Data Construction}
We sample 10{,}000 queries from LMArena \citep{chiang2024chatbot}, paired with reference responses generated via a Qwen3-Max self-improvement pipeline (details in Appendix~\ref{app:data_construction}). This setup simulates realistic reasoning distillation scenarios: high-quality responses emerge from multi-turn deliberation, but no gold-standard reasoning traces are available for supervision. Using Qwen3-4B-Thinking-2507 as the target model,\footnote{We validate observations on Qwen3-8B in Appendix~\ref{app:observations}.} we generate reasoning traces under each strategy and compute lexical, trajectory, and probabilistic anchoring. Unless otherwise noted, anchoring analyses use 5{,}000 evaluated examples per method. For trajectory anchoring, scalar raw tables report $100\times$ the sampled-prefix entropy gap defined in Appendix~\ref{app:trajectory_anchoring}, matching the percentage-point convention used for lexical and probabilistic anchoring.

\subsection{Behavioral Zones Construction}
\label{sec:behavioral_zones}

To empirically ground the anchoring measurement framework, we construct controlled reference conditions that isolate distinct regimes within the Trajectory--Probabilistic Anchoring plane (\cref{fig:behavioral_zones}). The horizontal axis is the reference-scaled trajectory anchoring score and the vertical axis is the reference-scaled probabilistic anchoring score. Thus the bottom-left corner represents low answer dependence in both the token-generation trajectory and the endpoint information contained in the trace, while the top-right corner represents severe answer leakage on both dimensions.

The four controls have two different origins. Two are real model texts: \textbf{Blind CoT} is a target-model reasoning trace generated from $Q$ alone, and \textbf{Response-as-CoT} uses the pre-committed response text $A$ itself as the trace. These form empirical low- and high-anchoring endpoints. The other two controls are mechanically constructed from the same $(Q,A)$ pairs, not proposed generation methods: \textbf{+Prob Anchor} is designed to increase answer recoverability from the trace while keeping the trace less token-deterministic from $A$, whereas \textbf{+Traj Anchor} is designed to make local token prediction strongly determined by $A$ while masking most answer content.

\begin{itemize}[leftmargin=*, itemsep=2pt]
    \item \textbf{Reason (Authentic Reasoning; bottom-left):} Blind CoT is generated without exposing the pre-committed answer to the model. It approximates standard forward reasoning and defines the low-anchoring corner: the trace is neither generated under answer-dependent token pressure nor especially informative about the hidden answer beyond what follows from the question.
    
    \item \textbf{Encode (Endpoint Information; top-left):} +Prob Anchor starts from a normal blind reasoning trace and appends an unordered tail of answer-derived evidence terms after neutral padding. This construction makes $A$ easier to predict from $R$ and therefore raises $\mathcal{A}_{\textup{prob}}$, but the unordered, padded evidence tail is not a step-by-step deterministic rendering of the answer. It therefore serves as a control for high endpoint information with weaker process-level dependence than direct copying.
    
    \item \textbf{Monitor (Process Anchoring; bottom-right):} +Traj Anchor uses a rule-based redaction of the answer prefix: function words and punctuation are preserved, a sparse subset of content words is copied, and the remaining content words are replaced by placeholders. Because the rule is deterministic given $A$, answer visibility makes the next tokens easier to predict and raises $\mathcal{A}_{\textup{traj}}$; because most semantic content is masked, the resulting trace carries much less recoverable endpoint information than the answer itself. This isolates the ``monitoring'' regime where the process is answer-steered even when explicit answer content is suppressed.
    
    \item \textbf{Copy (Severe Rationalization; top-right):} Response-as-CoT places the response text $A$ directly in the reasoning channel. It is the explicit leakage upper bound: the trace is both locally determined by the answer and maximally informative about the answer, so it anchors the high-high corner.
\end{itemize}

These controls separate the four quadrants used throughout the mechanism plots: Reason is low trajectory/low probabilistic anchoring, Encode is low trajectory/high probabilistic anchoring, Monitor is high trajectory/low probabilistic anchoring, and Copy is high on both axes. Individual method traces in \cref{fig:mechanism_diagnosis} are interpreted relative to this controlled calibration rather than by raw metric scale alone.

\subsection{Observations}
\label{sec:observations}

\paragraph{Suppression masks lexical anchoring but amplifies internal anchoring.}
As shown in \cref{tab:anchoring_metrics}, a dissociation exists between surface and latent anchoring effects. Suppression strategies (SUP, AUG-SUP) modestly reduce lexical anchoring ($\mathcal{A}_{\textup{lex}}$) compared to the neutral baseline, but diagnostic trajectory measurements show that they increase trajectory anchoring ($\mathcal{A}_{\textup{traj}}$)--the per-token generation dependence on the answer--while leaving probabilistic anchoring ($\mathcal{A}_{\textup{prob}}$) close to the neutral baseline. Stronger negative constraints may encourage the model to keep a more salient internal representation of the response precisely to avoid generating it explicitly; the model rationalizes more subtly, but more heavily.

\begin{table}[t]
\caption{Reference-normalized anchoring display scores by generation method. The columns use the same display scale as \cref{fig:mechanism_diagnosis}: color intensity for $\mathcal{A}_{\textup{lex}}$ and centroid coordinates for $\mathcal{A}_{\textup{traj}}$ and $\mathcal{A}_{\textup{prob}}$. Raw metric definitions are given above, and the display normalization is detailed in Appendix~\ref{app:display_score_normalization}.}
\label{tab:anchoring_metrics}
\centering
\small
\begin{tabular}{l ccc}
\toprule
\textbf{Method} & $\mathcal{A}_{\textup{lex}}^{\mathrm{disp}}$ $\downarrow$ & $\mathcal{A}_{\textup{traj}}^{\mathrm{disp}}$ $\downarrow$ & $\mathcal{A}_{\textup{prob}}^{\mathrm{disp}}$ $\downarrow$ \\
\midrule
NEU        & 41.9 & 47.5 & 52.3 \\
SUP        & 40.2 & 52.5 & 50.6 \\
AUG-SUP    & 40.5 & 52.7 & 49.4 \\
SSR        & \textbf{30.3} & \textbf{33.3} & \textbf{40.8} \\
\midrule
\textit{$\Delta$ (AUG-SUP vs.\ NEU)} & \textcolor{blue}{$-$3.3\%} & \textcolor{red}{+10.9\%} & \textcolor{blue}{$-$5.5\%} \\
\textit{$\Delta$ (SSR vs.\ NEU)}     & \textcolor{blue}{$-$27.6\%} & \textcolor{blue}{$-$29.8\%} & \textcolor{blue}{$-$21.9\%} \\
\bottomrule
\end{tabular}
\end{table}

\paragraph{Suppression induces pathological reasoning.}
Visualizing reasoning traces in the Trajectory and Probabilistic Anchoring plane reveals how suppression distorts the reasoning process (\cref{fig:mechanism_diagnosis}). Under NEU, traces already show substantial mass outside the Reason zone, indicating that default RCG is vulnerable to anchoring. Suppression (SUP, AUG-SUP) does not move traces into authentic reasoning: it slightly lowers Reason-zone mass and increases Monitor/Copy behavior. The model trades genuine derivation for constrained exploration or disguised encoding even when surface copying is reduced.

\paragraph{SSR achieves consistent reduction across all metrics.}
Unlike suppression methods, which trade lower lexical overlap for higher process-level answer dependence, SSR consistently reduces anchoring across all three levels (\cref{tab:anchoring_metrics}). By grounding generation in a pre-planned structural skeleton, SSR achieves the lowest lexical, trajectory, and probabilistic anchoring scores among the main methods. The reduction in lexical anchoring demonstrates effective surface-level mitigation, while the reduction in trajectory anchoring indicates that the generation process is less steered by the pre-committed response at each token, and the reduction in probabilistic anchoring confirms less total answer information in the trace.

\paragraph{Quality guardrail.}
Lower anchoring is useful only if the generated trace still supports the same abstract endpoint rather than drifting to a different task, stance, or artifact. We therefore report an abstract endpoint alignment guardrail in Appendix~\ref{app:endpoint_consistency}. This guardrail is deliberately abstract: it asks whether the trace targets the same user request, stance, and deliverable type as the reference answer, not whether it reproduces answer details. SSR is essentially indistinguishable from the neutral and suppression baselines on this guardrail ($0.989$ vs.\ $0.994$--$0.996$), indicating that its lower anchoring is not explained by endpoint drift.

\paragraph{SSR shifts traces toward more self-contained reasoning dynamics.}
Analyzing the behavioral zone distributions (\cref{fig:mechanism_diagnosis}), SSR counteracts the pathological shifts induced by suppression. While suppression leaves most traces in Encode, Monitor, or Copy zones, SSR achieves over half the traces in authentic reasoning (\cref{tab:behavioral_zones}). The structural skeleton helps avoid aimless generation by providing a plan and can discourage rationalization by defining granular step-wise intents. In the SSR panel, traces shift toward the bottom-left quadrant with reduced spread into pathological zones, indicating that SSR's structural guidance improves reasoning trajectories rather than merely shifting the mean while preserving high variance.

\begin{table}[t]
\caption{Distribution of reasoning traces across behavioral zones. Zones are read from the same reference-scaled display plane as \cref{fig:mechanism_diagnosis}, with a 0.5 threshold on trajectory and probabilistic anchoring. SSR maintains the highest proportion of authentic reasoning while minimizing pathological patterns.}
\label{tab:behavioral_zones}
\centering
\small
\begin{tabular}{l cccc}
\toprule
\textbf{Method} & \textbf{Reason} $\uparrow$ & \textbf{Encode} $\downarrow$ & \textbf{Monitor} $\downarrow$ & \textbf{Copy} $\downarrow$ \\
\midrule
NEU     & 38.0\% & 19.8\% & 26.9\% & 15.3\% \\
SUP     & 36.7\% & 16.8\% & 29.7\% & 16.8\% \\
AUG-SUP & 36.1\% & 17.3\% & 30.3\% & 16.3\% \\
SSR     & \textbf{54.8\%} & \textbf{16.3\%} & \textbf{18.2\%} & \textbf{10.7\%} \\
\bottomrule
\end{tabular}
\end{table}

\subsection{Interpretation}
\label{sec:interp}

The divergent behaviors of suppression-based methods and SSR can be understood through the lens of \emph{Ironic Process Theory} from cognitive psychology \citep{wegner1994ironic}. When instructed to suppress a concept (e.g., ``do not think of a white bear''), individuals paradoxically experience \emph{increased} accessibility of that concept. This occurs because suppression requires an active monitoring process to detect and exclude the forbidden content, a process that necessarily keeps the target concept cognitively salient.

We argue that an analogous phenomenon manifests in LLMs under semantic suppression. It is illustrated in \cref{fig:intro} (b) that when prompted to ``not reveal the answer,'' the model is likely to maintain a representation of what constitutes the answer to evaluate whether each generated token violates the constraint. This monitoring induces more anchoring effect, biasing the entire generation trajectory, even when the model successfully avoids surface-level copying.

SSR circumvents this paradox through a fundamentally different mechanism. Rather than imposing negative constraints that require answer monitoring, SSR provides a positive structural target that redirects the model's generative focus. As shown in \cref{fig:intro} (c), the skeleton specifies what operations to perform (e.g., \texttt{[PLAN]} $\rightarrow$ \texttt{[BRCH]} $\rightarrow$ \texttt{[EVAL]}) without encoding what results to obtain.

The skeleton provides a content-neutral structural target that reduces the need for continuous answer monitoring. Because generation is guided by the skeleton rather than by checking each token against the forbidden answer, trajectory anchoring decreases: the per-token entropy gap between answer-visible and answer-blind generation narrows. Meanwhile, the response-abstracted scaffold reduces probabilistic anchoring by channeling generation through a structural plan rather than through direct answer encoding, weakening the ironic cycle that suppression creates.

\begin{table*}[t]
    \caption{Main results across core benchmarks. Best results are in \textbf{bold}, and second-best are \underline{underlined}. SSR and SSR-D consistently outperform suppression-based methods, with SSR-D achieving the largest gains. Suppression methods show marginal improvement on ArenaHard but degrade multi-turn performance. (Arena: ArenaHard, EQ: EQ-Bench 3, IF: IFEval, MC: MultiChallenge)}
    \label{tab:main_results}
    \centering
    \small
    \setlength{\tabcolsep}{2.2pt}
    \begin{tabular*}{\textwidth}{@{\hspace{6pt}\extracolsep{\fill}} l cccc cccc cccc cccc @{\hspace{6pt}}}
        \toprule
        \multirow{2}{*}{\textbf{Method}} & \multicolumn{4}{c}{\textbf{Qwen3-8B-Think}} & \multicolumn{4}{c}{\textbf{Qwen3-32B-Think}} & \multicolumn{4}{c}{\textbf{Qwen3-4B-Think-2507}} & \multicolumn{4}{c}{\textbf{NBG4-3B-Base}} \\
        \addlinespace[3pt]
        & Arena & EQ & IF & MC & Arena & EQ & IF & MC & Arena & EQ & IF & MC & Arena & EQ & IF & MC \\
        \midrule
        NEU       & 50.8 & 76.4 & 80.0 & 38.2 & 68.2 & 85.5 & 83.4 & 43.8 & 41.7 & 79.3 & 76.7 & 38.9 & 32.4 & 79.6 & 74.3 & 34.5 \\
        SUP       & 51.5 & 77.5 & 81.2 & 35.5 & 68.8 & 85.6 & 84.5 & 41.5 & 42.3 & 79.2 & 78.5 & 36.6 & 32.9 & 81.5 & 76.5 & 33.4 \\
        AUG-SUP   & 52.8 & 78.3 & 81.5 & 33.0 & 69.1 & 86.4 & 83.2 & 41.3 & 43.5 & 79.0 & 80.1 & 34.5 & 33.2 & 82.8 & 77.2 & 33.0 \\
        \midrule
        SSR       & \underline{56.3} & \underline{82.6} & \underline{82.9} & \underline{39.4} & \underline{70.1} & \underline{87.7} & \underline{84.8} & \underline{44.6} & \underline{46.1} & \underline{81.9} & \underline{81.6} & \underline{40.2} & \underline{34.6} & \underline{84.4} & \underline{77.6} & \underline{35.8} \\
        SSR-D     & \textbf{59.5} & \textbf{86.1} & \textbf{83.7} & \textbf{41.2} & \textbf{72.0} & \textbf{89.6} & \textbf{85.1} & \textbf{45.0} & \textbf{50.8} & \textbf{83.1} & \textbf{83.8} & \textbf{42.0} & \textbf{37.2} & \textbf{86.5} & \textbf{78.2} & \textbf{37.1} \\
        \bottomrule
    \end{tabular*}
    \vskip -0.1in
\end{table*}

\begin{table}[t]
    \caption{Out-of-distribution performance on GPQA-Diamond and AIME 2025. \textbf{Bold} and \underline{underlined} values denote the best and second-best results among trained methods (excluding w/o Train).}
    \label{tab:ood_results}
    \centering
    \small
    \begin{tabular}{l cc cc}
        \toprule
        \multirow{2}{*}{\textbf{Method}} & \multicolumn{2}{c}{\textbf{Qwen3-8B-Think}} & \multicolumn{2}{c}{\textbf{Qwen3-32B-Think}} \\
        \cmidrule(lr){2-3} \cmidrule(lr){4-5}
        & GPQA-D & AIME & GPQA-D & AIME \\
        \midrule
        w/o Train & 58.1 & 70.0 & 64.1 & 73.3 \\
        \hdashline
        \addlinespace[3pt]
        NEU       & 49.5 & 33.3 & 60.6 & \underline{56.7} \\
        SUP       & 51.0 & 35.0 & 60.1 & 52.5 \\
        AUG-SUP   & 51.2 & 33.3 & 61.1 & 49.2 \\
        \midrule
        SSR & \underline{53.2} & \underline{40.0} & \underline{62.7} & \underline{56.7} \\
        SSR-D      & \textbf{56.6} & \textbf{44.2} & \textbf{65.2} & \textbf{59.2} \\
        \bottomrule
    \end{tabular}
    \vskip -0.1in
\end{table}

\section{Downstream Performance}

We next test the effectiveness of reverse CoT under different anchoring influences, addressing three questions: (1) Do reduced anchoring effects translate to improved downstream performance? (2) Does rationalization mitigation enable better out-of-distribution generalization? (3) Does distillation strengthen structural alignment beyond prompting?

The train-inference mismatch framework makes a specific prediction: traces with lower anchoring should produce better distillation outcomes, with the effect most pronounced on out-of-distribution tasks where pattern-matching to training examples cannot compensate for the mismatch. We test this prediction across in-distribution and OOD benchmarks.

\subsection{Experimental Setup}

\paragraph{Methods.}
\begin{itemize}
    \item We continue to evaluate the strategies defined in \cref{sec:method}: \textbf{NEU} (neutral prompting), \textbf{SUP} and \textbf{AUG-SUP} (semantic suppression baselines), \textbf{SSR} (structural skeleton-guided generation).
    \item \textbf{SSR-D} (distilled SSR). While SSR can be implemented via prompting, adherence to the structural format is often inconsistent, since models may omit tags or leak results into skeletons. SSR-D addresses this by fine-tuning the target model on SSR traces generated by a fine-tuned SSR teacher. Given a teacher-generated pair $(S^*, R^*)$, the target model is trained with two objectives: skeleton generation $\mathcal{L}_S = -\log p_\theta(S^* \mid Q, A)$ and reasoning reconstruction $\mathcal{L}_R = -\log p_\theta(R^* \mid Q, A, S^*)$. The combined objective $\mathcal{L} = \mathcal{L}_S + \mathcal{L}_R$ encourages both skeleton validity and skeleton-conditioned reasoning. Training details are provided in Appendix~\ref{app:training}.
\end{itemize}

\paragraph{Data.}
We sample 100k queries from LMArena \citep{chiang2024chatbot} and generate reference answers using Qwen3-Max. Given $(Q, A)$ pairs lacking intermediate reasoning, we generate reverse CoT using Qwen3-235B-Instruct-2507 \citep{qwen3} under each baseline condition, enabling evaluation at frontier-model scale. For SSR-D, a fine-tuned Qwen3-235B-Instruct-2507 SSR teacher serves as the generator of the distilled SSR traces.

\paragraph{Models.}
We fine-tune Qwen3-8B and Qwen3-32B \citep{qwen3} as representative open-weight thinking models, Qwen3-4B-Thinking-2507 for reasoning-specialized evaluation, and NBG4-3B-Base \citep{yang2025nanbeige4} to test cross-family transfer.

\paragraph{Benchmarks.}
We evaluate across diverse reasoning domains: open-ended reasoning \textbf{ArenaHard-v2.0} \citep{li2024crowdsourced}, human-aligned emotional intelligence reasoning \textbf{EQ-Bench 3} \citep{paech2023eqbench}, strict constraint adherence \textbf{IFEval} \citep{zhou2023instruction}, and multi-turn instruction-following \textbf{MultiChallenge} \citep{deshpande2025multichallenge}. For out-of-distribution evaluation, we use \textbf{GPQA-Diamond} \citep{rein2023gpqa} for scientific reasoning and \textbf{AIME 2025} \citep{aime25} for difficult mathematical reasoning. All evaluation details can be found in Appendix~\ref{app:benchmarks}.

\subsection{Main Results}

Table~\ref{tab:main_results} presents in-distribution results across four benchmarks and model scales.

\paragraph{Reduced anchoring yields consistent performance gains.}
SSR-D achieves the highest scores across all reported benchmarks and model scales, with consistent gains across model scales/families. On ArenaHard, improvements over NEU reach +9.1 points for Qwen3-4B-Thinking-2507, while MultiChallenge shows consistent gains up to +3.1 points. This correlation between lower internal anchoring (Table~\ref{tab:anchoring_metrics}) and higher downstream accuracy supports the usefulness of our framework: structured reasoning traces provide stronger training signals than post-hoc rationalizations.

\paragraph{Suppression degrades multi-turn reasoning.}
While SUP and AUG-SUP yield marginal single-turn improvements on ArenaHard, they consistently degrade MultiChallenge performance. This asymmetry is most pronounced for Qwen3-8B, where AUG-SUP underperforms NEU by 5 points despite appearing less anchored by $\mathcal{A}_{\text{lex}}$ (Figure~\ref{fig:intuition}). The pattern confirms that suppression-induced traces lack global coherence for sustained multi-turn reasoning, consistent with elevated $\mathcal{A}_{\text{traj}}$ in Table~\ref{tab:anchoring_metrics}.

\paragraph{Ruling out information loss as the driver.}
A potential concern is that SSR's lower $\mathcal{A}_{\textup{prob}}$ reflects reduced reasoning informativeness rather than reduced train-inference mismatch. Three lines of evidence argue against this. First, if information loss were the primary driver, downstream performance should decrease--yet SSR and SSR-D systematically outperform NEU in the main trained-method comparisons (Tables~\ref{tab:main_results}--\ref{tab:ood_results}). Second, behavioral zone analysis (Table~\ref{tab:behavioral_zones}) shows that SSR shifts traces into the Reason zone (38.0\% $\to$ 54.8\%), which corresponds to forward-reasoning behavior patterns; vacuous traces would not exhibit such patterns. Third, the abstract endpoint guardrail remains near-saturated for SSR (0.989 vs.\ 0.994--0.996 for NEU/SUP/AUG-SUP; Appendix~\ref{app:endpoint_consistency}), indicating that lower answer recoverability is not explained by drifting away from the intended task endpoint.

\paragraph{Distillation amplifies structural alignment.}
Comparing SSR and SSR-D reveals that prompting alone captures roughly half the potential benefit (e.g., +5 vs.\ +9 on ArenaHard for Qwen3-8B). The consistent SSR-D advantage reflects learned internalization of the skeleton pattern, ensuring reliable format adherence and stronger rationalization mitigation. Cross-architecture transfer to NBG4-3B-Base further suggests that the benefit is not limited to the Qwen model family.\footnote{We ablate the components of SSR in Appendix~\ref{app:ablation}.}

\subsection{Out-of-Distribution Generalization}

Table~\ref{tab:ood_results} reveals that standard RCG training can substantially degrade OOD performance, especially on AIME 2025. NEU-trained Qwen3-8B falls to less than half its untrained baseline, as the training data lack mathematical content comparable to competition-level problems. Suppression methods provide no recovery, while AUG-SUP even worsens AIME for Qwen3-32B by 7.5 points.

While all methods suffer from this domain gap, SSR and SSR-D substantially outperform suppression baselines through anchoring mitigation. For Qwen3-8B, SSR-D recovers about 30\% of the AIME gap to the untrained baseline, while Qwen3-32B with SSR-D exceeds its untrained GPQA-Diamond score. We attribute this advantage to structural decoupling: the model acquires transferable derivation patterns rather than content-specific shortcuts. Training data coverage determines the ceiling of OOD performance; anchoring mitigation determines how much of that ceiling is preserved.

\paragraph{Task-type dependence of mismatch costs.}
The severity of train-inference mismatch depends on task structure. On closed-form reasoning tasks with unique correct answers (GPQA-D, AIME), the mismatch is catastrophic: traces generated with answer access teach a shortcut reasoning style--linear, confident, non-exploratory--that collapses on unfamiliar problems. The most striking evidence is that NEU-trained Qwen3-8B drops below half its untrained AIME baseline (70.0 $\to$ 33.3), suggesting that high-PHR traces actively teach harmful reasoning patterns rather than merely failing to help. SSR-D partially recovers this loss (44.2), consistent with reduced mismatch enabling more transferable reasoning strategies.

On open-ended tasks (ArenaHard, EQ-Bench), the mismatch cost is more gradual: judge-based evaluation does not require exact answer recovery, so students can partially compensate by generating alternative but acceptable responses. Nevertheless, SSR-D's consistent gains (+9.1 ArenaHard for Qwen3-4B-Thinking-2507, +3.1 MultiChallenge over NEU) indicate that lower mismatch improves response quality even when evaluation is lenient.

\subsection{Additional Validation of RCoT Quality}

\paragraph{Filtering baselines and teacher-model bias.}
To compare against a strong forward-reasoning baseline, we evaluate a 16k subset on ArenaHard, where Qwen3-Max-Preview generates $r$ standard reasoning rollouts ($r=3$) and selects the best among them by self-judging using the same teacher as judge. When high-quality answers are available, ordinary RCG is not automatically stronger than repeated rollouts from a strong reasoning model: Best-of-3 filtering improves the base generator from 45.2 to 46.7, while NEU reaches 44.0. In contrast, SSR reaches 48.2 and SSR-D reaches 55.1 (\cref{tab:filtering_teacher_validation}), showing that structurally guided RCG can better leverage the given answer while preserving reasoning ability. To test whether the conclusion depends on Qwen3-Max as the sole teacher/evaluator, we repeat the same subset setting with Kimi2.5; SSR and SSR-D again outperform NEU, SUP, and AUG-SUP. This demonstrates that the gains are not specific to one teacher family.

\begin{table}[t]
\caption{Filtering and teacher-model validation on 16k subsets. The Qwen3-Max-Preview setting compares against standard forward-reasoning rollout filtering; the Kimi2.5 setting repeats data construction and evaluation with a different teacher/evaluator. Higher is better.}
\label{tab:filtering_teacher_validation}
\centering
\small
\begin{tabular}{lcc}
\toprule
\textbf{Method} & \textbf{Qwen3-Max-Preview} & \textbf{Kimi2.5} \\
\midrule
Native rollout & 45.2 & 37.8 \\
Best-of-3 rollout & 46.7 & 39.2 \\
\midrule
NEU       & 44.0 & 35.4 \\
SUP       & 45.5 & 36.9 \\
AUG-SUP   & 45.8 & 37.5 \\
\midrule
SSR       & \underline{48.2} & \underline{39.6} \\
SSR-D     & \textbf{55.1} & \textbf{44.0} \\
\bottomrule
\end{tabular}
\end{table}

\paragraph{External metric validation and qualitative evidence.}
A 500-sample LLM-as-judge study provides convergent external validation. In the \emph{self-contained derivation} pass, the judge sees only the query and reasoning trace and scores whether the trace can stand on its own. In the \emph{post-hoc dependence} pass, the judge additionally sees the pre-committed answer and scores how answer-driven the trace appears. Two independent judges, Claude Opus 4.6 (claude-opus-4-6, \citealp{anthropic2026claudeopus46}) and GPT-5.4 (gpt-5.4-2026-03-05, \citealp{openai2026gpt54}), consistently rank SSR-D best on self-contained derivation and lowest on post-hoc dependence, with SSR also ahead of SUP and NEU (\cref{tab:llm_judge_claude,tab:llm_judge_gpt}). Appendix~\ref{app:case_study} reports a qualitative case study, and Appendix~\ref{app:skeleton_structure} reports structural skeleton analysis.
In both tables, Var. denotes sample variance over the 500 judged examples.

\begin{table}[t]
\caption{LLM-as-judge validation with Claude Opus 4.6. Higher self-contained derivation and lower post-hoc dependence are better.}
\label{tab:llm_judge_claude}
\centering
\small
\begin{tabular}{lcccc}
\toprule
\textbf{Method} & \textbf{Deriv.} $\uparrow$ & \textbf{Var.} & \textbf{Dep.} $\downarrow$ & \textbf{Var.} \\
\midrule
NEU   & 4.375 & 0.890 & 3.563 & 1.217 \\
SUP   & 4.439 & 0.929 & 3.592 & 1.028 \\
SSR   & \underline{4.615} & 0.492 & \underline{3.448} & 0.776 \\
SSR-D & \textbf{4.744} & 0.263 & \textbf{2.698} & 0.684 \\
\bottomrule
\end{tabular}
\end{table}

\begin{table}[t]
\caption{LLM-as-judge validation with GPT-5.4. Higher self-contained derivation and lower post-hoc dependence are better.}
\label{tab:llm_judge_gpt}
\centering
\small
\begin{tabular}{lcccc}
\toprule
\textbf{Method} & \textbf{Deriv.} $\uparrow$ & \textbf{Var.} & \textbf{Dep.} $\downarrow$ & \textbf{Var.} \\
\midrule
NEU   & 3.580 & 1.458 & 4.480 & 0.919 \\
SUP   & 3.690 & 1.529 & 4.380 & 1.208 \\
SSR   & \underline{4.113} & 0.768 & \underline{4.155} & 1.445 \\
SSR-D & \textbf{4.410} & 0.366 & \textbf{3.040} & 1.312 \\
\bottomrule
\end{tabular}
\end{table}

\section{Related Work}

\paragraph{Reverse Chain-of-Thought Generation.}
While Chain-of-Thought (CoT) enables systematic reasoning \citep{wei2022chain, deepseekai2025deepseekr1,peng2025learn,feng2026pace}, supervised data often lacks intermediate traces. Existing solutions employ iterative self-improvement \citep{zelikman2022star}, reinforcement learning \citep{shao2024deepseekmath}, or trajectory reconstruction \citep{shridhar2023distilling, li2025from, wang2025reverse}. However, these methods expose the target answer during trace generation, inducing anchoring effects that compromise reasoning quality. We address this through structural decoupling, reducing direct dependence on answer-specific content.

\paragraph{Post-hoc Rationalizations in Chain-of-Thought.}
LLM rationales are not always faithful \citep{lanham2023measuring,paul2024making,tanneru2024hardness}. When models decide on a response before starting to reason, their post-hoc rationalizations may be influenced by unstated biases \citep{turpin2023language, lyu2023faithful} or contain deceptive shortcuts \citep{li2024deceptive, yee2024faithful,bentham2024chain}. Recent answer-attribution work further shows that LRM answers can arise from competing reasoning and retrieval mechanisms, and that retrieval-dominant behavior can produce post-hoc explanations for memorized answers \citep{wang2026reasoning}. While current mitigation methods focus on verification or process rewards \citep{lightman2024lets, wang2025stepwise,feng2025cosineverifier}, we frame the problem through an anchoring lens: answer visibility anchors generation as justification rather than derivation. We frame post-hoc rationalization as a source of train-inference mismatch: answer-visible generation produces traces whose distribution diverges from what students encounter at inference time. Our anchoring hierarchy quantifies this mismatch from surface tokens to generation-process dependence to total information transmission, and SSR mitigates it by construction through structural skeleton.

\paragraph{Structural Approaches in Reasoning.}
Prior work has established the importance of reasoning structure in LLM chain-of-thought \citep{madaan2023makes,li2025language,wang2025emergent,peng2025encode}. Structured reasoning has been obtained via non-linear exploration \citep{yao2023tree, besta2024got}, stage decomposition \citep{wang2023plan, zhou2023leasttomost,wen2025lock}, and meta-reasoning \citep{wang2024meta, zhang2025searching}. Recent long-CoT studies also show that fully constructed trajectories can underperform emergent teacher traces in standard forward-reasoning settings \citep{yang2025demystifying}. This is complementary to our finding rather than contradictory: SSR is not a hand-constructed full trajectory, but a lightweight response-abstracted scaffold for the answer-visible reverse-CoT setting. Our SSR leverages structure to redirect generation away from anchoring cues. Crucially, we constrain skeletons to avoid direct answer leakage, distinguishing our approach from efficiency-driven methods such as \citet{ning2024skeleton}.

\section{Conclusion}

In this work, we formalized post-hoc rationalization in the setting of Reverse Chain-of-Thought Generation (RCG), where visible pre-committed responses drive models toward rationalization rather than derivation. Our analysis revealed that standard semantic suppression paradoxically exacerbates this issue via an \textit{ironic process}, where the load of monitoring forbidden answers increases latent information leakage. We introduced Structural Skeleton-guided Reasoning (SSR), a paradigm that shifts from negative constraints to positive structural decoupling. By generating a response-abstracted functional skeleton, SSR provides a content-neutral structural target that can weaken the cycle of ironic monitoring. Experiments show that SSR reduces overall rationalization, while its applications in prompting and distillation consistently improve downstream task performance and improve out-of-distribution robustness in our evaluations. These results suggest that faithful reasoning may not be achieved by suppressing the undesirable, but by scaffolding the reasoning process to close the gap between how traces are generated--with answer access--and how students must reason--without it.

\section*{Acknowledgements}

This work was supported by the National Natural Science Foundation of China (62576010) and the Academic Research Projects of Beijing Union University (NO. ZK10202405).

\section*{Impact Statement}

This paper presents work whose goal is to advance the field of machine learning by improving the measurement and mitigation of post-hoc rationalization in generated reasoning traces. More faithful reasoning traces may benefit model evaluation, debugging, and reasoning distillation by reducing misleading explanations that merely justify pre-committed answers. The methods and metrics introduced here could also be misused to make model explanations appear more reliable than they are if reported without appropriate validation. We therefore encourage practitioners to pair anchoring metrics with task-specific correctness, robustness, and human evaluation when deploying reasoning-generation systems.

\bibliography{arxiv}
\bibliographystyle{icml2026}

\newpage
\appendix
\onecolumn

\section{Anchoring Metrics and Diagnostics}
\label{app:anchoring_details}

\subsection{Legacy Entropic Anchoring Metric}
\label{app:legacy_entropic_anchoring}

An earlier version of this work used an \emph{entropic anchoring} metric, denoted $\mathcal{A}_{\textup{ent}}$, as the middle level between lexical overlap and probabilistic answer recovery. We include the definition and diagnostic results here because it is a useful predecessor to the current trajectory metric, but it is no longer used as a primary anchoring measure. The original motivation came from the Uniform Information Density (UID) view of communication \citep{levy2006speakers,meister2021revisiting,tsipidi2024surprise} and from recent analyses of entropy profiles in LLM reasoning traces \citep{gwak2025revisiting}. The hypothesis was that post-hoc rationalized traces would have unnatural information-density dynamics: too little global exploration and abrupt local transitions where the trace repeatedly snaps back toward a pre-committed answer.

For a reasoning trace segmented into $N$ steps, let step $i$ contain $M_i$ tokens and let $p_t$ be the scorer model's next-token distribution at token position $t$. We first compute the step-level information density
\[
ID_i =
\frac{1}{M_i}
\sum_{t\in i}
\frac{-\sum_{v\in V} p_t(v)\log p_t(v)}{\log |V|}.
\]
Let $\widetilde{ID}=[\widetilde{ID}_1,\ldots,\widetilde{ID}_N]$ be the resulting step-density vector rescaled to the unit interval within a trace. The legacy metric combined a global-uniformity term with a local non-uniformity term:
\begin{align}
G_{\textup{unif}} &= \frac{1}{1+\mathrm{Var}(\widetilde{ID})/\tau_g},\\
\Delta_i &= |\widetilde{ID}_{i+1}-\widetilde{ID}_{i}|,\qquad
L_{\textup{non-unif}} =
\frac{\sigma_\Delta/\mu_\Delta}{1+\sigma_\Delta/\mu_\Delta},\\
\mathcal{A}_{\textup{ent}} &= \sqrt{G_{\textup{unif}}\cdot L_{\textup{non-unif}}}.
\end{align}
Intuitively, $G_{\textup{unif}}$ is high when the trace lacks broad entropy variation, and $L_{\textup{non-unif}}$ is high when adjacent steps have sharp entropy discontinuities. The old behavioral-zone figure used $\mathcal{A}_{\textup{ent}}$ as the horizontal axis and $\mathcal{A}_{\textup{prob}}$ as the vertical axis.

\begin{table}[t]
\caption{Legacy entropic anchoring scores from the earlier metric calibration and a blind-baseline rerun. Lower is better. The display column reproduces the earlier presentation, while the calibrated columns use raw excess scores after blind-baseline calibration.}
\label{tab:legacy_entropic_results}
\centering
\begin{tabular}{lccc}
\toprule
\textbf{Method} & $\mathcal{A}_{\textup{ent}}^{\mathrm{disp}}$ & $\mathcal{A}_{\textup{ent}}^{\mathrm{excess}}$ & $\mathcal{A}_{\textup{prob}}^{\mathrm{excess}}$ \\
\midrule
NEU     & 55.9 & 10.88 & 17.41 \\
SUP     & 57.4 & 10.65 & 16.69 \\
AUG-SUP & 58.4 & 11.53 & 16.19 \\
SSR     & --   &  8.14 &  8.49 \\
\bottomrule
\end{tabular}
\end{table}

The earlier display table appeared to support the intended story: SUP and AUG-SUP slightly increased $\mathcal{A}_{\textup{ent}}$ relative to NEU (55.9 $\to$ 57.4/58.4), while lowering lexical anchoring. However, the cleaner blind-baseline rerun weakened that interpretation. After blind-baseline calibration, SUP was essentially tied with NEU on $\mathcal{A}_{\textup{ent}}^{\mathrm{excess}}$ (10.65 vs.\ 10.88), AUG-SUP was only slightly higher (11.53), and neither suppression baseline increased total probabilistic excess. SSR remained lower on both entropic and probabilistic excess, but the suppression-specific ``ironic increase'' claim was no longer robust.

\paragraph{Why we retired $\mathcal{A}_{\textup{ent}}$.}
The main problem is that $\mathcal{A}_{\textup{ent}}$ is a trace-shape statistic, not an answer-conditional mismatch statistic. It measures the geometry of an entropy curve after the trace has already been generated; it does not compare answer-visible generation against answer-blind generation, nor does it ask whether the answer makes the next token easier to predict. A trace can receive a low entropic score because it is short, smooth, templatic, or gist-like, even if it still encodes the answer strongly; conversely, a richer exploratory trace can receive a high score without being more answer-anchored.

This concern appeared empirically in the legacy per-record diagnostics. Using the earlier per-trace min-max $\mathcal{A}_{\textup{ent}}$ records, reasoning length was strongly associated with the entropic score even after subtracting method means. On the four main methods (NEU, SUP, AUG-SUP, SSR), the method-residual correlation between reasoning tokens and $\mathcal{A}_{\textup{ent}}$ was Pearson $r=0.285$ and Spearman $\rho=0.720$; over a broader legacy baseline set, the corresponding residual correlations were $r=0.318$ and $\rho=0.574$. Method means showed the same risk: a long gist-like baseline had high $\mathcal{A}_{\textup{ent}}$ and high probabilistic encoding, while compact structured variants had very low $\mathcal{A}_{\textup{ent}}$ partly because the traces were shorter and smoother. These patterns made the metric too sensitive to length, segmentation, and formatting choices.

For this reason, the current paper keeps the useful intuition--answer access can distort the generation trajectory--but measures it directly with trajectory anchoring. The replacement metric $\mathcal{A}_{\textup{traj}}$ compares the next-token entropy under $(Q,R_{[:t]})$ against the entropy under $(Q,A,R_{[:t]})$, so it targets the train-inference mismatch itself: how much easier the already-generated trajectory becomes when the hidden answer is visible. In the final length diagnostic (Appendix~\ref{app:length_effect}), $\mathcal{A}_{\textup{traj}}$ has no positive length association on the main methods, unlike the retired entropic score.

\subsection{Trajectory Anchoring: Theoretical Foundations}
\label{app:trajectory_anchoring}

Trajectory anchoring measures the per-token influence of answer access on the generation process. For a fixed prefix $R_{[:t]}$, the entropy of the next-token distribution under two contexts gives:
\begin{align}
H_{\textup{blind}}(t) &= H\bigl(p(\cdot \mid Q, R_{[:t]})\bigr), \\
H_{\textup{aware}}(t) &= H\bigl(p(\cdot \mid Q, A, R_{[:t]})\bigr).
\end{align}
The gap $\Delta H(t) = H_{\textup{blind}}(t) - H_{\textup{aware}}(t)$ is non-negative in expectation by the information-theoretic property that conditioning reduces entropy. Thus
\[
\mathcal{A}_{\textup{traj}} = \frac{1}{|R|}\sum_t \Delta H(t)
\]
is zero when $A$ provides no additional next-token information beyond what $Q$ and $R_{[:t]}$ already provide, and grows as answer visibility increasingly steers the token trajectory.

\paragraph{Estimator used in experiments.}
Computing the entropy gap at every token is expensive for long traces, so the reported Qwen3-4B experiments use a fixed-prefix estimator. For each trace, we evaluate $\Delta H(t)$ at the token prefixes closest to 10\%, 25\%, 50\%, 75\%, and 90\% of the reasoning trace, and report
\[
\widehat{\mathcal{A}}_{\textup{traj}}
=
\frac{1}{|\Pi|}
\sum_{\pi\in\Pi}
\left[
H\bigl(p(\cdot \mid Q, R_{[:t_\pi]})\bigr)
-
H\bigl(p(\cdot \mid Q, A, R_{[:t_\pi]})\bigr)
\right],
\quad
\Pi=\{0.10,0.25,0.50,0.75,0.90\},
\quad
t_\pi=\lceil \pi |R| \rceil .
\]
This estimator is the trajectory-anchoring score used in scalar tables. It is a sampled-prefix estimate of the per-token definition above, not a separate metric.

\paragraph{Connection to train-inference mismatch.}
During RCG training, the model generates $R$ token-by-token under the answer-aware regime. During inference, the student generates under the answer-blind regime. The per-token entropy gap $\Delta H(t)$ directly quantifies how much more uncertain the student is at each step compared to the teacher that produced the training trace.

\paragraph{Relationship to other metrics.}
$\mathcal{A}_{\textup{traj}}$ is distinct from $\mathcal{A}_{\textup{prob}}$ in both direction and granularity. $\mathcal{A}_{\textup{prob}}$ measures $R \to A$ information: how much reading $R$ helps predict $A$. $\mathcal{A}_{\textup{traj}}$ measures $A \to R$ influence: how much knowing $A$ helps predict the next token of $R$. The two can dissociate. A trace may transmit substantial answer information while being generated with limited per-token answer steering if the information emerges from reasoning structure rather than direct monitoring.

\subsection{Abstract Endpoint Alignment Guardrail}
\label{app:endpoint_consistency}

Abstract endpoint alignment is a quality guardrail rather than an anchoring metric. It checks whether a generated trace still points to the same abstract endpoint as the reference assistant answer: the same user request, response act, stance, and deliverable or artifact type. This criterion deliberately does not require answer-specific wording, examples, entities, numbers, or code fragments to appear in the reasoning trace. A trace is marked inconsistent only when it drifts to a different request or turn, implies an incompatible stance or refusal behavior, targets a different deliverable, or becomes too generic to identify the intended endpoint.

We score this guardrail with the abstract endpoint judge prompt in the prompt appendix (Appendix~\ref{app:prompt}). For each valid judged example, the judge returns a binary consistency decision and a score in $\{1,3,5\}$. The reported alignment rate is the fraction of valid examples judged consistent. This diagnostic is reported separately from $\mathcal{A}_{\textup{lex}}$, $\mathcal{A}_{\textup{traj}}$, and $\mathcal{A}_{\textup{prob}}$, because it measures whether lower anchoring preserves the abstract answer-level target rather than how much answer information appears in the trace.

\begin{table}[h]
\caption{Abstract endpoint alignment guardrail. Higher is better. The main methods are all near-saturated, showing that SSR's anchoring reduction is not caused by endpoint drift.}
\label{tab:endpoint_consistency}
\centering
\begin{tabular}{lcc}
\toprule
\textbf{Method} & \textbf{Alignment rate} $\uparrow$ & \textbf{Mean score} $\uparrow$ \\
\midrule
NEU & 0.996 & 4.984 \\
SUP & 0.996 & 4.984 \\
AUG-SUP & 0.994 & 4.976 \\
SSR & 0.989 & 4.956 \\
\midrule
QA-SUP & 0.993 & 4.972 \\
PG-SUP & 0.994 & 4.976 \\
FDB & 0.955 & 4.818 \\
Gist & 0.999 & 4.996 \\
BoN & 0.994 & 4.976 \\
NGramBlock & 0.926 & 4.701 \\
\bottomrule
\end{tabular}
\end{table}

The main methods satisfy this guardrail at very high and similar rates. SSR remains in the same near-saturated regime as NEU/SUP/AUG-SUP, so its lower anchoring should not be read as a failure to preserve the abstract answer endpoint. Lower scores for FDB and NGramBlock illustrate why the guardrail is still useful: methods that alter the trace aggressively can become less endpoint-specific even when they are informative for anchoring diagnostics.

\subsection{RCoT Length and Anchoring Metrics}
\label{app:length_effect}

A possible confound is that longer RCoT traces have more opportunities to mention answer-related content or to reduce answer uncertainty, making lower anchoring appear to be a mere length artifact. We therefore measure the per-example relationship between reasoning length and the three anchoring metrics on the final Qwen3-4B fixed method set. To avoid conflating method identity with length, we report method-residual correlations over the four main methods (NEU, SUP, AUG-SUP, SSR): for each method, both reasoning length and the metric value are centered by that method's mean before computing the correlation. Here $\mathcal{A}_{\textup{lex}}$ uses the final question-filtered lexical score.

\begin{table}[h]
\caption{Correlation between RCoT length and anchoring metrics for the main four methods ($n=4{,}000$ traces). Values are computed after removing each method's mean length and mean metric value. Slopes are reported in the same percentage-point units used by scalar raw anchoring tables: $100\times\mathcal{A}_{\textup{lex}}$, $100\times\mathcal{A}_{\textup{traj}}$, and $100\times\mathcal{A}_{\textup{prob}}$ per 100 generated reasoning tokens.}
\label{tab:length_effect}
\centering
\begin{tabular}{lccc}
\toprule
\textbf{Metric} & \textbf{Pearson $r$} & \textbf{Spearman $\rho$} & \textbf{Slope / 100 tok.} \\
\midrule
$\mathcal{A}_{\textup{lex}}$  & 0.225  & 0.355  & 1.1 \\
$\mathcal{A}_{\textup{traj}}$ & -0.050 & -0.073 & -0.6 \\
$\mathcal{A}_{\textup{prob}}$ & 0.209  & 0.266  & 0.7 \\
\bottomrule
\end{tabular}
\end{table}

The diagnostic shows a weak positive length association for lexical and probabilistic anchoring, but no positive association for trajectory anchoring. This effect is too small to explain SSR's lower anchoring. For example, SSR traces are only 23.3 tokens shorter than NEU on average; the residual length slope predicts decreases of only 0.252 points in $\mathcal{A}_{\textup{lex}}$ and 0.152 points in $\mathcal{A}_{\textup{prob}}$, whereas the observed reductions are 3.696 and 1.222 points, respectively. For $\mathcal{A}_{\textup{traj}}$, the residual length slope predicts a small increase of about 0.14 points for the shorter SSR traces, whereas the observed SSR--NEU reduction is 8.714 points. Moreover, SSR is slightly longer than SUP and AUG-SUP, so length would predict slightly higher lexical/probabilistic anchoring for SSR relative to those suppression baselines, opposite to the observed pattern. We therefore treat RCoT length as a mild covariate for surface and endpoint predictability, not as the driver of the main anchoring improvements.

\subsection{Reference-Normalized Display Scores}
\label{app:display_score_normalization}

The underlying metrics above have different native scales: lexical anchoring is a question-filtered overlap ratio, trajectory anchoring is an entropy/confidence-gap diagnostic, and probabilistic anchoring is a clipped answer-surprisal reduction. In scalar raw tables, all three are reported as percentage-point scores: $100\times\mathcal{A}_{\textup{lex}}$, $100\times\mathcal{A}_{\textup{traj}}$, and $100\times\mathcal{A}_{\textup{prob}}$. For the behavioral-zone figures and the main scalar anchoring table, we additionally report reference-normalized display scores on a common $0$--$100$ scale.

For each raw metric $m\in\{\mathcal{A}_{\textup{lex}},\mathcal{A}_{\textup{traj}},\mathcal{A}_{\textup{prob}}\}$, let
\[
L_m=\overline{m}(\text{Blind CoT}), \qquad
M_m=\operatorname{median}_{g\in\{\mathrm{NEU},\mathrm{SUP},\mathrm{AUG\mbox{-}SUP},\mathrm{SSR}\}}\overline{m}(g),
\]
where $\overline{m}(g)$ is the method-level mean in the Qwen3-4B-Thinking-2507 anchoring evaluation. We set the symmetric saturation point
\[
U_m = L_m + 2(M_m-L_m)
\]
and transform a raw per-record score $x$ by
\[
D_m(x)=100\cdot\operatorname{clip}_{[0,1]}\left(\frac{x-L_m}{U_m-L_m}\right).
\]
Thus Blind CoT anchors the lower end of the display scale, the median main-method raw score maps to $50$, and values at or above $U_m$ saturate at $100$. This is a visualization and reporting scale; the raw metric definitions remain those in the main text.

For the main Qwen3-4B-Thinking-2507 calibration used in \cref{fig:behavioral_zones,fig:mechanism_diagnosis,tab:anchoring_metrics}, the reference triples $(L_m,M_m,U_m)$ on this percentage-point raw scale are $(5.7466,12.7747,19.8027)$ for $\mathcal{A}_{\textup{lex}}$, $(2.3013,33.0909,63.8805)$ for $\mathcal{A}_{\textup{traj}}$, and $(0.2572,5.5812,10.9051)$ for $\mathcal{A}_{\textup{prob}}$. Additional scalar anchoring tables reuse this calibration for comparability; behavioral-zone coordinate tables may separately report model-specific controlled coordinates when they are used for zone assignment.

The aggregation matches \cref{fig:mechanism_diagnosis}. For $\mathcal{A}_{\textup{traj}}$ and $\mathcal{A}_{\textup{prob}}$, the reported scalar is the method-centroid display coordinate, $D_m(\overline{m}(g))$. For $\mathcal{A}_{\textup{lex}}$, the figure uses per-record color intensity, so the table reports the mean displayed color score, $\mathbb{E}_{i\in g}[D_{\textup{lex}}(m_i)]$. This is why the lexical display score is not generally identical to applying $D_{\textup{lex}}$ after taking the raw method mean.

\subsection{Additional Anchoring Results on Qwen3-8B}
\label{app:observations}

Beyond Qwen3-4B-Thinking-2507, we also test the observations in \cref{sec:observations} on Qwen3-8B. This cross-model audit uses the final SSR prompt and reports the same question-filtered lexical anchoring, trajectory anchoring, probabilistic anchoring, behavioral-zone statistics, and endpoint guardrail. The Blind CoT endpoint is generated independently from the question-only setting and rescored with Qwen3-8B, matching the Qwen3-4B construction rather than reusing NEU traces. The pattern is consistent with the main Qwen3-4B analysis: suppression leaves process-level anchoring slightly above NEU, while SSR reduces all three anchoring metrics and shifts more traces into the Reason zone.

\begin{figure}[t]
  \centering
  \includegraphics[width=0.31\linewidth]{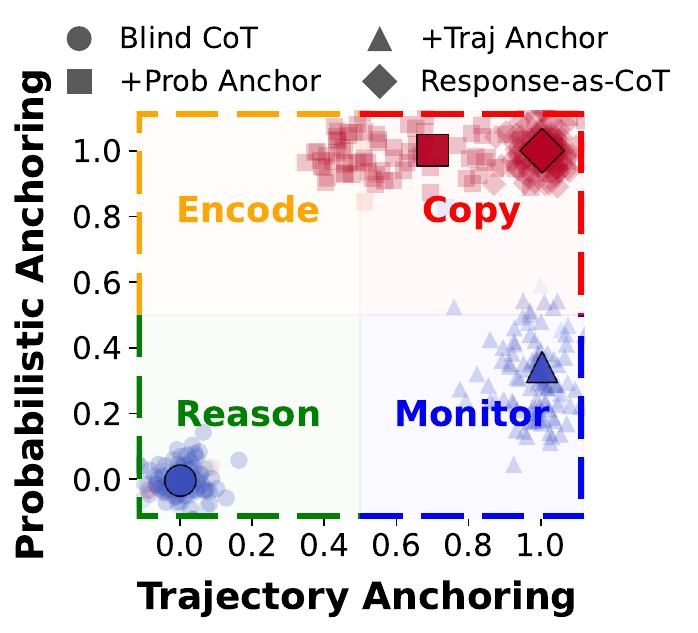}
  \caption{Controlled reference calibration for the Qwen3-8B behavioral-zone display. The corrected Blind CoT endpoint uses independent question-only traces, while Response-as-CoT, +Prob Anchor, and +Traj Anchor follow the same construction as the Qwen3-4B analysis.}
  \label{fig:behavioral_zones_qwen3_8b}
\end{figure}

\begin{figure*}[t]
  \centering
  \includegraphics[width=0.96\textwidth]{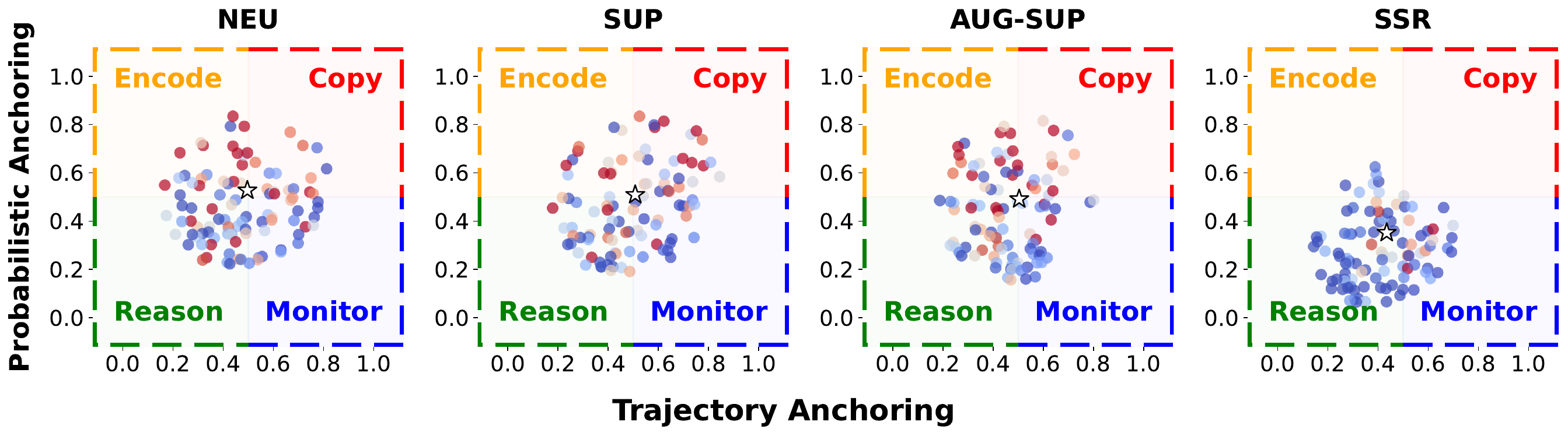}
  \caption{Mechanism diagnosis across generation strategies on Qwen3-8B, using the corrected Blind CoT endpoint. SSR shifts traces toward lower trajectory and probabilistic anchoring compared with NEU, SUP, and AUG-SUP.}
  \label{fig:mechanism_diagnosis_qwen3_8b}
\end{figure*}

\begin{table}[t]
\caption{Additional reference-normalized anchoring display scores by generation method on Qwen3-8B. The table uses the same display procedure as the main Qwen3-4B analysis, with the corrected Qwen3-8B Blind CoT endpoint as the low reference.}
\label{tab:anchoring_metrics_qwen3_8b}
\centering
\begin{tabular}{l ccc}
\toprule
\textbf{Method} & $\mathcal{A}_{\textup{lex}}^{\mathrm{disp}}$ $\downarrow$ & $\mathcal{A}_{\textup{traj}}^{\mathrm{disp}}$ $\downarrow$ & $\mathcal{A}_{\textup{prob}}^{\mathrm{disp}}$ $\downarrow$ \\
\midrule
NEU        & 41.9 & 49.6 & 52.7 \\
SUP        & 40.2 & 50.8 & 50.9 \\
AUG-SUP    & 40.5 & 50.4 & 49.1 \\
SSR        & \textbf{30.3} & \textbf{43.4} & \textbf{35.2} \\
\midrule
\textit{$\Delta$ (AUG-SUP vs.\ NEU)} & \textcolor{blue}{$-3.3\%$} & \textcolor{red}{+1.6\%} & \textcolor{blue}{$-6.9\%$} \\
\textit{$\Delta$ (SSR vs.\ NEU)}     & \textcolor{blue}{$-27.6\%$} & \textcolor{blue}{$-12.4\%$} & \textcolor{blue}{$-33.3\%$} \\
\bottomrule
\end{tabular}
\end{table}

\begin{table}[t]
\caption{Controlled reference conditions for Qwen3-8B behavioral-zone construction. The first three columns and the $x,y$ coordinates use the same display plane as \cref{fig:mechanism_diagnosis_qwen3_8b}, with $x=\mathcal{A}_{\textup{traj}}$ and $y=\mathcal{A}_{\textup{prob}}$.}
\label{tab:controlled_anchors_qwen3_8b}
\centering
\begin{tabular}{lccccc}
\toprule
\textbf{Condition} & $\mathcal{A}_{\textup{lex}}^{\mathrm{disp}}$ & $\mathcal{A}_{\textup{traj}}^{\mathrm{disp}}$ & $\mathcal{A}_{\textup{prob}}^{\mathrm{disp}}$ & $x$ & $y$ \\
\midrule
Blind CoT        & 15.4 & 0.0  & 0.0   & 0.00 & 0.00 \\
+Prob Anchor     & 93.8 & 69.8 & 100.0 & 0.70 & 1.00 \\
+Traj Anchor     & 6.8  & 100.0 & 34.5 & 1.00 & 0.34 \\
Response-as-CoT  & 100.0 & 100.0 & 100.0 & 1.00 & 1.00 \\
\bottomrule
\end{tabular}
\end{table}

\begin{table}[t]
\caption{Distribution of reasoning traces across behavioral zones on Qwen3-8B, using the corrected Blind CoT endpoint and the same display-plane thresholds as \cref{fig:mechanism_diagnosis_qwen3_8b}.}
\label{tab:behavioral_zones_qwen3_8b}
\centering
\begin{tabular}{l cccc}
\toprule
\textbf{Method} & \textbf{Reason} $\uparrow$ & \textbf{Encode} $\downarrow$ & \textbf{Monitor} $\downarrow$ & \textbf{Copy} $\downarrow$ \\
\midrule
NEU     & 37.6\% & 19.4\% & 26.3\% & 16.7\% \\
SUP     & 38.9\% & 16.1\% & 26.6\% & 18.4\% \\
AUG-SUP & 38.8\% & 18.7\% & 26.8\% & 15.7\% \\
SSR     & \textbf{50.6\%} & \textbf{12.7\%} & \textbf{25.3\%} & \textbf{11.4\%} \\
\bottomrule
\end{tabular}
\end{table}

\begin{table}[t]
\caption{Abstract endpoint alignment guardrail for the Qwen3-8B cross-model audit. Higher is better.}
\label{tab:endpoint_consistency_qwen3_8b}
\centering
\begin{tabular}{lcc}
\toprule
\textbf{Method} & \textbf{Alignment rate} $\uparrow$ & \textbf{Mean score} $\uparrow$ \\
\midrule
NEU & 0.996 & 4.984 \\
SUP & 0.996 & 4.984 \\
AUG-SUP & 0.994 & 4.976 \\
SSR & 0.989 & 4.956 \\
\bottomrule
\end{tabular}
\end{table}

\subsection{Extended Mismatch Metrics}
\label{app:extended_mismatch_metrics}

We further report diagnostic mismatch metrics from the Qwen3-4B-Thinking-2507 scorer. Here $\mathcal{A}_{\textup{traj}}$ corresponds to the sampled-prefix entropy gap, $B(1.00)$ is the raw endpoint bit gain corresponding to probabilistic anchoring before normalization, commitment concentration summarizes how concentrated answer information is over the trace, and KL/JSD provide alternative next-token distributional comparisons.

\begin{table*}[t]
\caption{Extended mismatch metrics on Qwen3-4B-Thinking-2507. The $\mathcal{A}_{\textup{traj}}$ column reports $100\times$ the sampled-prefix entropy gap for compact display; $B(1.00)$ is an absolute endpoint bit-gain diagnostic and keeps its original units. Lower $B(1.00)$, $\mathcal{A}_{\textup{traj}}$, KL, and JSD indicate lower answer dependence; commitment concentration is descriptive and not optimized directly.}
\label{tab:extended_mismatch_metrics}
\centering
\begin{tabular}{lccccc}
\toprule
\textbf{Method} & $B(1.00)$ $\downarrow$ & $\mathcal{A}_{\textup{traj}}$ (\%) $\downarrow$ & \textbf{Commitment Conc.} & \textbf{KL} $\downarrow$ & \textbf{JSD} $\downarrow$ \\
\midrule
NEU & 0.1412 & 31.54 & 0.5516 & 0.9608 & 0.1371 \\
SUP & 0.1353 & 34.65 & 0.5616 & 0.9980 & 0.1431 \\
AUG-SUP & 0.1331 & 34.73 & 0.5610 & 0.9900 & 0.1438 \\
SSR & 0.1114 & 22.83 & 0.5592 & 0.6479 & 0.1020 \\
QA-SUP & 0.1926 & 30.43 & 0.5320 & 0.8757 & 0.1283 \\
PG-SUP & 0.2164 & 29.91 & 0.5041 & 0.8376 & 0.1216 \\
FDB & 0.0253 & 1.83 & 0.6283 & 0.5552 & 0.0796 \\
Gist & 0.2607 & 27.48 & 0.4338 & 0.6673 & 0.1030 \\
BoN & 0.0905 & 31.08 & 0.5807 & 0.9020 & 0.1319 \\
NGramBlock & 0.0942 & 26.62 & 0.5741 & 0.8563 & 0.1229 \\
\bottomrule
\end{tabular}
\end{table*}

\subsection{Commitment Profile Analysis}
\label{app:commitment_profile}

We analyze the temporal distribution of answer information injection by computing $B(\tau)=\log P(A\mid Q,R_{[:\tau]})-\log P(A\mid Q)$ at prefix checkpoints. The endpoint $B(1.00)$ equals the raw endpoint bit gain. The Qwen3-4B diagnostic profile shows that suppression does not meaningfully lower total answer information relative to NEU, whereas SSR reduces the endpoint bit gain and the middle-to-late prefix gains.

\begin{table*}[t]
\caption{Raw prefix bit-gain profile $B(\tau)$ on Qwen3-4B-Thinking-2507. Values are absolute bit gains, not ratios.}
\label{tab:commitment_profile}
\centering
\begin{tabular}{lcccccc}
\toprule
\textbf{Method} & $B(0.10)$ & $B(0.25)$ & $B(0.50)$ & $B(0.75)$ & $B(0.90)$ & $B(1.00)$ \\
\midrule
NEU & 0.0178 & 0.0142 & 0.0552 & 0.1097 & 0.1326 & 0.1412 \\
SUP & 0.0197 & 0.0154 & 0.0540 & 0.1056 & 0.1268 & 0.1353 \\
AUG-SUP & 0.0179 & 0.0165 & 0.0543 & 0.1046 & 0.1236 & 0.1331 \\
SSR & 0.0349 & 0.0365 & 0.0646 & 0.0943 & 0.1003 & 0.1114 \\
QA-SUP & 0.0209 & 0.0237 & 0.0784 & 0.1460 & 0.1783 & 0.1926 \\
PG-SUP & 0.0223 & 0.0302 & 0.0890 & 0.1586 & 0.1952 & 0.2164 \\
FDB & -0.0004 & -0.0154 & -0.0196 & -0.0138 & 0.0081 & 0.0253 \\
Gist & 0.0442 & 0.0689 & 0.1378 & 0.2073 & 0.2408 & 0.2607 \\
BoN & 0.0191 & 0.0124 & 0.0359 & 0.0729 & 0.0845 & 0.0905 \\
NGramBlock & 0.0177 & 0.0133 & 0.0393 & 0.0759 & 0.0890 & 0.0942 \\
\bottomrule
\end{tabular}
\end{table*}

Ratio metrics can be misleading when total answer information differs across methods, so we report absolute bit gains. Under the Qwen3-4B scorer, SSR has higher very-early $B(0.10)$ and $B(0.25)$ than NEU, but by late prefixes and the endpoint it transmits substantially less answer information (e.g., $B(1.0)=0.1114$ vs.\ $0.1412$), indicating that the benefit is concentrated in the middle-to-late trace rather than in the first prefix checkpoints.

\subsection{Path Diversity Compression}
\label{app:path_diversity}

As a complementary diagnostic, we measure whether answer visibility changes the diversity of possible reasoning paths. For each query, we sample $k=5$ answer-conditioned traces and $k=5$ answer-blind traces, compute the mean pairwise embedding distance within each set, and report
\[
\mathrm{DivCompress}
=
\frac{\mathrm{Diversity}(R_1,\ldots,R_k \sim P(R\mid Q,A))}
{\mathrm{Diversity}(R^*_1,\ldots,R^*_k \sim P(R\mid Q))}.
\]
Values close to 1 indicate that answer-conditioned generation preserves roughly the same path diversity as answer-blind reasoning, while larger values indicate that answer-conditioned traces occupy a broader or more variable reasoning-path set under this embedding metric. After filtering incomplete sample sets, the resulting ratios are highly heavy-tailed because a small number of answer-blind sample sets have near-zero embedding diversity. For this reason, we treat path diversity as a stress diagnostic rather than a primary quantitative claim: the main methods show NEU $166.3$ [10.6, 459.5], SUP $169.9$ [16.0, 465.5], AUG-SUP $327.9$ [12.8, 812.7], and SSR $97.9$ [6.6, 219.2], but the wide intervals caution against fine-grained ranking.

\section{SSR Theory and Implementation}
\label{app:ssr_theory}

We provide an information-theoretic framework for the \emph{skeleton-mediated} channel of Structural Skeleton-guided Reasoning (SSR). The goal of this appendix is deliberately limited: we bound the information about the pre-committed response $A$ that can be transmitted through the skeleton $S$. We do not claim a formal bound on the residual reasoning-generation channel $I(A;R\mid Q,S)$, since the second phase still conditions on $A$. The overall reduction of probabilistic anchoring is therefore an empirical result, measured by $\mathcal{A}_{\mathrm{prob}}$ in the main experiments.

\subsection{Skeleton Capacity and Functional Invariance}

Let a structural skeleton
\[
S = \langle (F_i, C_i) \rangle_{i=1}^{N}
\]
consist of functional tags $F_i \in \mathcal{F}$ and content summaries $C_i$. All information quantities in this appendix are measured in bits. We first work in the fixed-length setting $N=n$, and later recover the variable-length case by marginalizing over $N$. Under $N=n$, let
\[
S_{<i} = \bigl((F_1,C_1),\ldots,(F_{i-1},C_{i-1})\bigr)
\]
denote the autoregressive skeleton history before step $i$, and let
\[
H_i^{(n)} = (Q, N=n, S_{<i}, F_i)
\]
be the random context used in the fixed-length analysis for the $i$-th content summary. Conditioning on $N=n$ is an analytical convention; it does not assert that the model observes the final length during generation.

\begin{definition}[Sequential $\epsilon$-Functional Invariance]
\label{def:func_inv}
A skeleton generator is \emph{sequentially $\epsilon$-functionally invariant} under fixed length $N=n$ if each content-summary step satisfies
\[
I(A; C_i \mid H_i^{(n)}) \leq \epsilon_i^{(n)} .
\]
Here $\epsilon_i^{(n)}\in\mathbb{R}_{\ge 0}$ is a scalar conditional-MI bound after marginalizing over the random context $H_i^{(n)}$, so these bounds compose additively under the chain rule below. We use this average mutual-information form unless stated otherwise.
\end{definition}

A stronger pointwise sufficient condition is that, for all $(h_i,a)$ in the support of $(H_i^{(n)},A)$,
\[
D_{\mathrm{KL}}^{(2)}\!\left(
P(C_i\mid H_i^{(n)}=h_i,A=a)
\,\|\, 
P(C_i\mid H_i^{(n)}=h_i)
\right)
\leq \epsilon_i^{(n)},
\]
where $D_{\mathrm{KL}}^{(2)}$ uses base-2 logarithms. This pointwise condition implies Definition~\ref{def:func_inv}, but is stronger than the average conditional-MI requirement: averaging the pointwise KL over $(H_i^{(n)},A)$ gives $I(A;C_i\mid H_i^{(n)})\leq \epsilon_i^{(n)}$. Although $\epsilon_i^{(n)}$ is a theoretical primitive, it can be empirically diagnosed through held-out estimates of the pointwise bit gain $\log_2[P(C_i\mid H_i^{(n)},A)/P(C_i\mid H_i^{(n)})]$, or by probing summaries for $A$-relevant features.

\begin{proposition}[Skeleton-Channel Bound]
\label{prop:skeleton_channel_bound}
For a fixed-length $n$-step skeleton with discrete functional tags $F_i \in \mathcal{F}$, if the content summaries satisfy sequential $\epsilon$-functional invariance, then
\[
I(A; S \mid Q,N=n)
\leq
\sum_{i=1}^{n} H(F_i\mid Q,N=n,S_{<i})
+ \sum_{i=1}^{n}\epsilon_i^{(n)}
\leq
n\log_2|\mathcal{F}|+\sum_{i=1}^{n}\epsilon_i^{(n)} .
\]
For variable-length skeletons,
\[
I(A; S \mid Q)
= I(A;N\mid Q)+\mathbb{E}_{n\sim N}\!\left[I(A;S\mid Q,N=n)\right]
\]
where each fixed-length term is evaluated under the conditional law given $N=n$,
and hence
\[
I(A; S \mid Q)
\leq
I(A;N\mid Q)
+
\mathbb{E}_{n\sim N}\!\left[
n\log_2|\mathcal{F}|+
\sum_{i=1}^{n}\epsilon_i^{(n)}
\right].
\]
In particular, if skeleton length is bounded by $N_{\max}$, then $I(A;N\mid Q)\leq H(N\mid Q)\leq \log_2 N_{\max}$, so the length channel contributes at most a small additive term relative to long free-form responses.
\end{proposition}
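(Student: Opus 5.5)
The plan is a chain-rule decomposition over the autoregressive skeleton steps, working throughout under the conditional law given $N=n$.

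\textbf{Fixed-length bound.} Write $S=((F_1,C_1),\ldots,(F_n,C_n))$ and expand
\[
I(A;S\mid Q,N=n)=\sum_{i=1}^{n}\Bigl[I(A;F_i\mid Q,N=n,S_{<i})+I(A;C_i\mid Q,N=n,S_{<i},F_i)\Bigr].
\]
The second term in each summand is exactly $I(A;C_i\mid H_i^{(n)})$, which Definition~\ref{def:func_inv} bounds by $\epsilon_i^{(n)}$. For the first term, mutual information is at most the conditional entropy of either argument, so
\[
I(A;F_i\mid Q,N=n,S_{<i})\le H(F_i\mid Q,N=n,S_{<i}).
\]
Since $F_i$ is discrete with values in $\mathcal{F}$, this entropy is at most $\log_2|\mathcal{F}|$. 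Summing over $i$ gives both inequalities of the fixed-length claim.

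\textbf{Variable length.} First I would note that $N$ is a deterministic function of $S$, namely the number of steps. Hence $I(A;S\mid Q)=I(A;S,N\mid Q)$. The chain rule then gives
\[
I(A;S,N\mid Q)=I(A;N\mid Q)+I(A;S\mid Q,N).
\]
The last term is, by definition, the average over $n\sim N$ of the fixed-length quantities. Strictly, this average is over the joint law of $(Q,N)$; I would state the expectation that way, or read $\mathbb{E}_{n\sim N}$ as shorthand for it. Each summand is then bounded using the fixed-length result. Finally, $I(A;N\mid Q)\le H(N\mid Q)$. If $N$ takes values in $\{1,\ldots,N_{\max}\}$, this is at most $\log_2 N_{\max}$.

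\textbf{Main obstacle.} The main obstacle is bookkeeping rather than depth: the invariance bounds must be applied in the right form.
\begin{itemize}
\item The contexts $H_i^{(n)}$ in Definition~\ref{def:func_inv} must match exactly the conditioning sets produced by the chain rule, including $F_i$ and the conditioning on $N=n$.
\item The $\epsilon_i^{(n)}$ must be the averaged conditional-MI bounds, not pointwise ones, so that they add without extra terms.
\end{itemize}

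I would also remark that the pointwise KL condition implies the averaged bound, by taking the expectation over $(H_i^{(n)},A)$, so it can be used in its place. For a continuous or unbounded $C_i$ no finiteness issue arises, since only the $\epsilon_i^{(n)}$ terms touch $C_i$.
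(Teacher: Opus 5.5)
Your proposal is correct and follows the paper's proof essentially step for step: the autoregressive chain rule with the tag terms bounded by $H(F_i\mid Q,N=n,S_{<i})\le\log_2|\mathcal{F}|$ and the summary terms bounded by the invariance definition, then $I(A;S\mid Q)=I(A;N\mid Q)+I(A;S\mid Q,N)$, using that $N$ is a function of $S$. Your caveat about averaging over the joint law of $(Q,N)$ is unnecessary. Each $I(A;S\mid Q,N=n)$ already averages over $Q$ given $N=n$, so the outer expectation over $n\sim N$ alone is exact.
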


\begin{proof}
For fixed $N=n$, apply the chain rule in the natural autoregressive skeleton order:
\[
I(A;S\mid Q,N=n)
=
\sum_{i=1}^{n}
I(A;F_i,C_i\mid Q,N=n,S_{<i}).
\]
Expanding each pair $(F_i,C_i)$ gives
\[
I(A;S\mid Q,N=n)
=
\sum_{i=1}^{n}
I(A;F_i\mid Q,N=n,S_{<i})
+
\sum_{i=1}^{n}
I(A;C_i\mid Q,N=n,S_{<i},F_i).
\]
The tag terms are bounded by conditional entropy:
\[
I(A;F_i\mid Q,N=n,S_{<i})
\leq H(F_i\mid Q,N=n,S_{<i})
\leq \log_2|\mathcal{F}|.
\]
The content-summary terms are bounded directly by Definition~\ref{def:func_inv}, since $H_i^{(n)}=(Q,N=n,S_{<i},F_i)$:
\[
I(A;C_i\mid Q,N=n,S_{<i},F_i)
= I(A;C_i\mid H_i^{(n)})
\leq \epsilon_i^{(n)}.
\]
Summing over $i$ proves the fixed-length bound. For the variable-length case, $N$ is a deterministic function of the skeleton $S$, so the chain rule gives
\[
I(A;S\mid Q)
=I(A;N,S\mid Q)
=I(A;N\mid Q)+I(A;S\mid Q,N).
\]
Equivalently,
\[
I(A;S\mid Q,N)=\mathbb{E}_{n\sim N}\!\left[I(A;S\mid Q,N=n)\right].
\]
Applying the fixed-length bound for each realized length yields the stated result.
\end{proof}

\begin{remark}[No free lunch in invariance]
The useful operating regime is not $\epsilon_i^{(n)}\to 0$ at all costs. If the skeleton transmits no answer-relevant guidance through tag order, length, or abstract operations, it may become too generic to support the intended reasoning path, pushing answer dependence back into the residual term $I(A;R\mid Q,S)$ during reasoning generation. Formally, the decomposition below means that driving $I(A;S\mid Q)$ toward zero does not automatically reduce total anchoring: the residual $I(A;R\mid Q,S)$ can grow toward $H(A\mid Q,S)$ when $S$ stops carrying useful guidance. SSR therefore aims for a low-bandwidth structural channel: answer-relevant guidance is routed through coarse structure, while concrete result leakage in content summaries is discouraged.
\end{remark}

\subsection{Connection to the Anchoring Measurement Hierarchy}

\paragraph{Lexical anchoring.}
Content summaries satisfying functional invariance describe \emph{what operation to perform} rather than \emph{what result to obtain} (e.g., ``calculate the ratio'' rather than ``calculate $0.5$''). This design discourages direct lexical overlap between the skeleton and the pre-committed response. The final reasoning trace can still leak answer content during phase two, so lexical anchoring remains an empirical metric rather than a theorem-level consequence.

\paragraph{Trajectory anchoring.}
The structural skeleton specifies a reasoning topology through functional tags (e.g., \texttt{[PLAN]} $\to$ \texttt{[BRCH]} $\to$ \texttt{[EVAL]}). Exploration tags such as \texttt{[BRCH]} and \texttt{[RFLX]} can create natural branching and consolidation points, while \texttt{[EVAL]} and \texttt{[SUMM]} can mark verification phases. This provides a qualitative mechanism for reducing per-token answer monitoring: the next token can be guided by the structural target rather than by direct comparison against the hidden answer. This is not a formal guarantee on $\mathcal{A}_{\mathrm{traj}}$, since phase two still conditions on $A$, but it motivates why SSR should reduce the answer-aware versus answer-blind entropy gap.

\paragraph{Probabilistic anchoring.}
The skeleton-channel bound above most directly informs the endpoint information metric. Probabilistic anchoring $\mathcal{A}_{\mathrm{prob}}$ measures the clipped fraction of baseline answer surprisal removed by the reasoning trace:
\[
\mathcal{A}_{\mathrm{prob}}(q,r,a)
=
\operatorname{clip}_{[0,1]}\!\left(
\frac{\ell_\theta(a \mid q,r)-\ell_\theta(a \mid q)}
{-\ell_\theta(a \mid q)}
\right),
\quad
\ell_\theta(a \mid c)=\frac{1}{|a|}\log_2 P_\theta(a\mid c).
\]
This score should be read as a scorer-model proxy for endpoint recoverability rather than as a literal mutual-information estimator. It is low when $R$ leaves the answer nearly as uncertain as $Q$ alone, and high when $R$ makes $A$ substantially more predictable under the scoring model.
The unnormalized numerator corresponds to per-token conditional bit gain under the scoring model, while the bounds in this appendix are stated in total mutual-information units. Thus the theory motivates the expected direction of $\mathcal{A}_{\mathrm{prob}}$, but the empirical tables validate the proxy behavior directly. Appendix~\ref{app:theory_validation} reports additional boundary diagnostics showing that the main conclusions are not driven by upper clipping of $\mathcal{A}_{\mathrm{prob}}$.

Under SSR's two-phase generation, the reasoning trace is generated as
\[
R \sim P(R \mid S, Q, A).
\]
The dependence between $A$ and $R$ admits the bound
\[
I(A; R \mid Q) \leq I(A; S \mid Q) + I(A; R \mid Q, S).
\]
This follows from the chain rule and non-negativity of mutual information:
\[
I(A;R\mid Q)
\leq I(A;S,R\mid Q)
= I(A;S\mid Q)+I(A;R\mid Q,S),
\]
where the inequality uses $I(A;S\mid Q,R)\geq 0$.
The first term is the skeleton-mediated channel bounded by Proposition~\ref{prop:skeleton_channel_bound}. The second term is residual anchoring during reasoning generation. In the worst case, it can be as large as $H(A\mid Q,S)\leq H(A\mid Q)$, which can make the overall bound vacuous when the skeleton provides little useful structure. Proposition~\ref{prop:skeleton_channel_bound} therefore gives a guarantee only on information transmitted through $S$; whether total probabilistic anchoring decreases under SSR is an empirical claim, supported by \cref{tab:anchoring_metrics,fig:mechanism_diagnosis,tab:anchoring_metrics_qwen3_8b}.

\subsection{Theory-to-Metric Validation and Residual Channel}
\label{app:theory_validation}

The skeleton-channel analysis above is intentionally a limited theorem: it constrains information that can pass through $S$, not all information that can pass through the final trace $R$. We therefore add lightweight diagnostics that connect the theory to the measured traces without turning the bound into a stronger claim than it supports.

\paragraph{Skeleton leakage proxy.}
The invariance assumption in Definition~\ref{def:func_inv} is stated in conditional mutual-information terms, which are difficult to estimate directly for free-form summaries. As an observable proxy, we measure question-filtered lexical leakage in the generated skeleton content summaries, after removing tags and \texttt{[HIGH]/[LOW]} markers. Skeleton summaries carry substantially less answer-specific lexical content than the realized reasoning trace produced from those skeletons (\cref{tab:theory_validation_diagnostics}). This does not prove a small $\sum_i \epsilon_i^{(n)}$, but it verifies the intended low-bandwidth operating regime: the skeleton is much closer to an operation plan than to a compressed answer rendering.

\paragraph{Residual-channel proxy.}
Because phase two still conditions on $A$, the residual term $I(A;R\mid Q,S)$ remains the main theoretical escape hatch. A direct residual-channel experiment would generate $R\sim P(R\mid Q,S)$ without answer access and compare it with full SSR. We do not report that stricter ablation here. Instead, we include an answer-swap sensitivity diagnostic: keeping $Q$ fixed and swapping the visible answer, we regenerate traces and measure how much the trace changes. SSR is less sensitive than NEU under word-level, content-level, and embedding-level changes, indicating that the final trace is less driven by the specific visible answer even after the skeleton has been realized. This is an operational residual-channel check, not a formal estimate of $I(A;R\mid Q,S)$.

\begin{table}[h]
\caption{Theory-to-metric diagnostics for SSR. Lower skeleton lexical leakage and lower answer-swap sensitivity indicate weaker answer transmission through the skeleton and final trace, respectively.}
\label{tab:theory_validation_diagnostics}
\centering
\begin{tabular}{lccc}
\toprule
\textbf{Diagnostic} & \textbf{NEU} & \textbf{SSR} & \textbf{SSR--NEU} \\
\midrule
Skeleton question-filtered lexical anchoring & -- & 4.694 & -- \\
Final SSR reason question-filtered lexical anchoring & -- & 11.352 & -- \\
Answer-swap ROUGE-L sensitivity & 0.812 & 0.799 & $-0.014$ \\
Answer-swap content-Jaccard sensitivity & 0.903 & 0.856 & $-0.047$ \\
Answer-swap embedding sensitivity & 0.0012 & 0.0007 & $-0.0005$ \\
\bottomrule
\end{tabular}
\end{table}

\paragraph{Probabilistic-boundary diagnostic.}
Since $\mathcal{A}_{\mathrm{prob}}$ clips normalized surprisal reduction into $[0,1]$, we also inspect the raw unnormalized endpoint bit gain $B(1.00)$ and clipping boundary rates (\cref{tab:aprob_boundary_diagnostics}). The upper clipping boundary is never reached for the main methods, so SSR's lower $\mathcal{A}_{\mathrm{prob}}$ is not an artifact of saturation. A small fraction of examples have negative endpoint bit gain, meaning the trace makes the answer less predictable under the scorer; this rate is also lower for SSR than for suppression baselines.

\begin{table}[h]
\caption{Probabilistic anchoring boundary diagnostic on the Qwen3-4B main methods. $B(1.00)$ is the unnormalized endpoint bit-gain diagnostic. $B(1.00)<0$ identifies examples whose trace increases answer surprisal, while $\mathcal{A}_{\mathrm{prob}}=1$ identifies upper-bound saturation.}
\label{tab:aprob_boundary_diagnostics}
\centering
\begin{tabular}{lcccc}
\toprule
\textbf{Method} & $\mathcal{A}_{\mathrm{prob}}$ & $B(1.00)$ mean & $B(1.00)<0$ & $\mathcal{A}_{\mathrm{prob}}=1$ \\
\midrule
NEU     & 5.829 & 14.446 & 4.9\% & 0.0\% \\
SUP     & 5.641 & 13.827 & 7.1\% & 0.0\% \\
AUG-SUP & 5.522 & 13.634 & 7.4\% & 0.0\% \\
SSR     & 4.606 & 11.283 & 3.7\% & 0.0\% \\
\bottomrule
\end{tabular}
\end{table}

\paragraph{Failure conditions.}
These diagnostics also clarify where the theory should not be overread. SSR is a mitigation mechanism, not a guarantee of faithful reasoning. If the skeleton is too coarse, it may fail to guide the trace and push answer dependence back into $I(A;R\mid Q,S)$. If the skeleton is too fine or includes answer-specific summaries, the skeleton channel itself can leak the response. If the correct reasoning structure is inherently answer-dependent, even tag choices and length can transmit information about $A$. The empirical claim is therefore that the final prompt operates in a useful middle regime for the evaluated tasks: skeletons are informative enough to guide generation, but abstract enough to reduce answer recoverability and answer-conditioned trajectory dependence.

\subsection{Mechanistic Hypothesis: Avoiding Ironic Process Amplification}

Section~\ref{sec:interp} interprets the behavior of suppression prompts through the lens of ironic process theory. We treat this as a mechanistic hypothesis rather than a formal proposition. Suppression prompts may increase answer salience because the model must avoid explicit answer leakage, whereas SSR shifts part of the generation objective toward matching a structural target $S$ rather than continuously checking whether each token reveals $A$. Since phase two still conditions on $A$, this hypothesis does not imply that SSR cannot attend to or use the answer; it only predicts a weaker need for explicit answer-exclusion monitoring.

\paragraph{Falsifiable predictions.}
The monitoring hypothesis predicts that, relative to NEU, suppression prompts (SUP/AUG-SUP) should exhibit (i) higher attention mass from reasoning tokens onto the answer span $A$, (ii) higher linear-probe accuracy for $A$-related features in mid-layer hidden states, and (iii) larger causal effects when answer-related activations are patched into the residual stream during reasoning generation. Conversely, SSR should reduce these answer-salience signatures relative to suppression while maintaining lower $\mathcal{A}_{\mathrm{prob}}$. We leave this mechanistic verification to future work.

\subsection{SSR Implementation Details}

The definitions of functional tags are given in \cref{tab:tag_definitions}. When designing the functional tag set, we draw insight from meta-reasoning paradigms \citep{zhang2025searching,sui2025meta} to describe a compact collection of reasoning operations.
In the final SSR prompt (Appendix~\ref{app:prompt}), each skeleton line follows the form \texttt{n. [TAG][HIGH/LOW] <short action sentence>}. The \texttt{[HIGH]} marker indicates steps where the reasoning trace should carry more local detail, such as difficult checks, central judgments, method choices, uncertainty boundaries, or high-information framing decisions. The prompt asks for 6--12 skeleton steps depending on task complexity, with diagnostic, importance-driven, or information-density-driven steps added only when warranted by the task.

\begin{table*}[t]
    \caption{Functional Tag Definitions.}
    \label{tab:tag_definitions}
    \centering
    \begin{tabular}{l l p{6.5cm}}
        \toprule
        \textbf{Tag} & \textbf{Full Name} & \textbf{Description} \\
        \midrule
        \texttt{PLAN} & Planning and Understanding & Comprehending input, defining goals/constraints, outlining a high-level plan. \\
        \texttt{RETR} & Retrieval & Searching for needed information from external or internal knowledge. \\
        \texttt{INFR} & Inference and Deduction & Logical reasoning, calculation, transformation, or generating intermediate conclusions. \\
        \texttt{EVAL} & Evaluation and Verification & Checking correctness, consistency, or sufficiency of prior results. \\
        \texttt{SUMM} & Summary and Refinement & Integrating intermediate results, refining expression, producing final answers. \\
        \texttt{BTRK} & Backtrack & When evaluation fails, returning to earlier decisions to revise strategy. \\
        \texttt{RFLX} & Reflection & Reviewing the reasoning to derive insights or generate new plans/backtracks. \\
        \texttt{BRCH} & Branch & Considering multiple possible reasoning paths and selecting one. \\
        \bottomrule
    \end{tabular}
    \vskip -0.1in
\end{table*}

\textbf{Content Summary Guidelines.} Skeleton lines should:
\begin{itemize}
    \item Describe the operation, check, selection criterion, or response-shaping action rather than the result.
    \item Stay short, task-specific, and free of template placeholders.
    \item Avoid revealing concrete outputs, intermediate values, or final-answer content.
    \item Keep one primary reasoning intent per line; split composite operations into separate steps.
\end{itemize}

\subsection{Structural Properties of SSR Skeletons}
\label{app:skeleton_structure}

We analyze the SSR skeletons generated during data construction to verify that they are organized structural guides rather than fixed prompt templates. The skeleton lengths span a broad range, and tag usage is clearly non-uniform: \texttt{INFR} is the dominant operation, with \texttt{PLAN} and \texttt{EVAL} also frequent, while \texttt{BRCH} and \texttt{BTRK} appear less often (\cref{fig:skeleton_step_tag}). This distribution indicates a broad but task-adaptive scaffold, not a rigid hand-written pattern.

\begin{figure*}[t]
\centering
\includegraphics[width=0.82\textwidth]{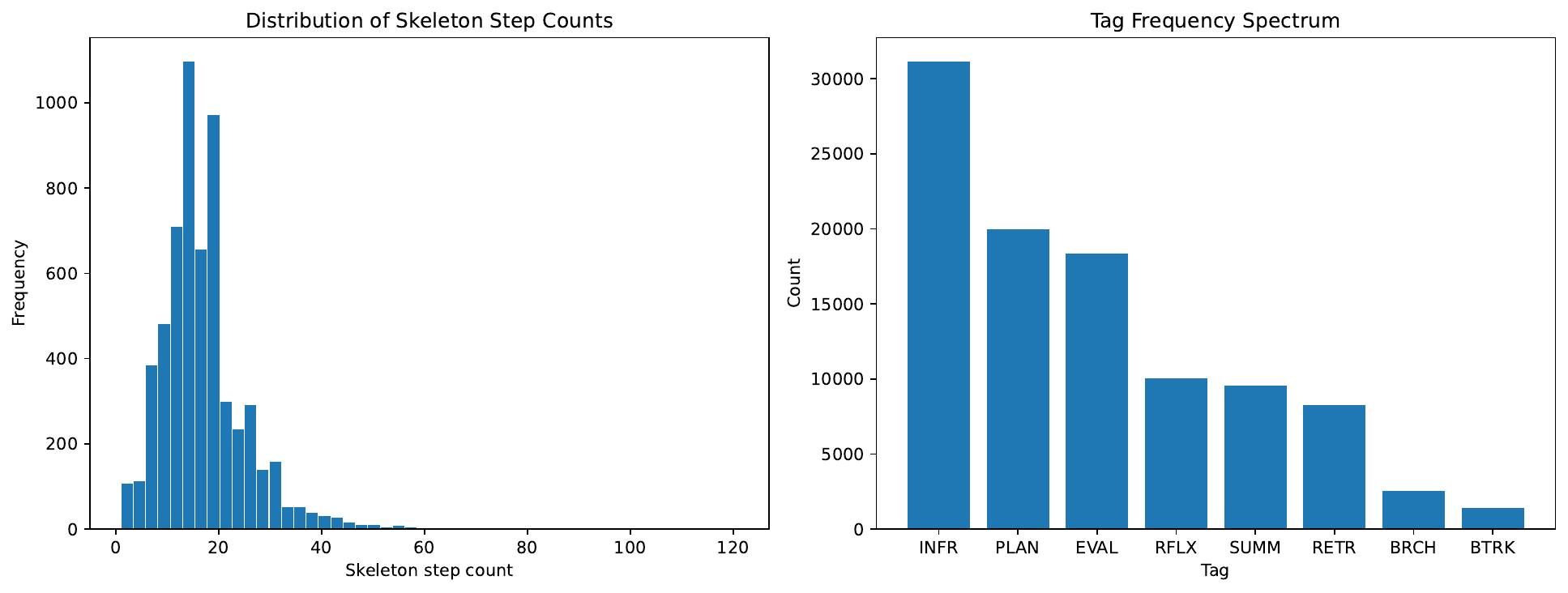}
\caption{Skeleton step-count distribution and functional-tag frequency. SSR skeletons vary substantially in length and show non-uniform tag usage, with inference, planning, and evaluation serving as the most common operations.}
\label{fig:skeleton_step_tag}
\end{figure*}

The positional and transition patterns further show coherent staged organization. \texttt{PLAN} and \texttt{RETR} tend to appear earlier, \texttt{EVAL} and especially \texttt{SUMM} later, while \texttt{INFR}, \texttt{BRCH}, \texttt{RFLX}, and \texttt{BTRK} concentrate more in the middle-to-late stages (\cref{fig:skeleton_position_transition}). The transition matrix reveals strong \texttt{RETR}$\rightarrow$\texttt{INFR}, \texttt{INFR}$\rightarrow$\texttt{INFR}, \texttt{INFR}$\rightarrow$\texttt{EVAL}, and persistent late \texttt{SUMM} behavior, matching a natural retrieve, infer, verify, and summarize progression.

\begin{figure*}[t]
\centering
\includegraphics[width=0.86\textwidth]{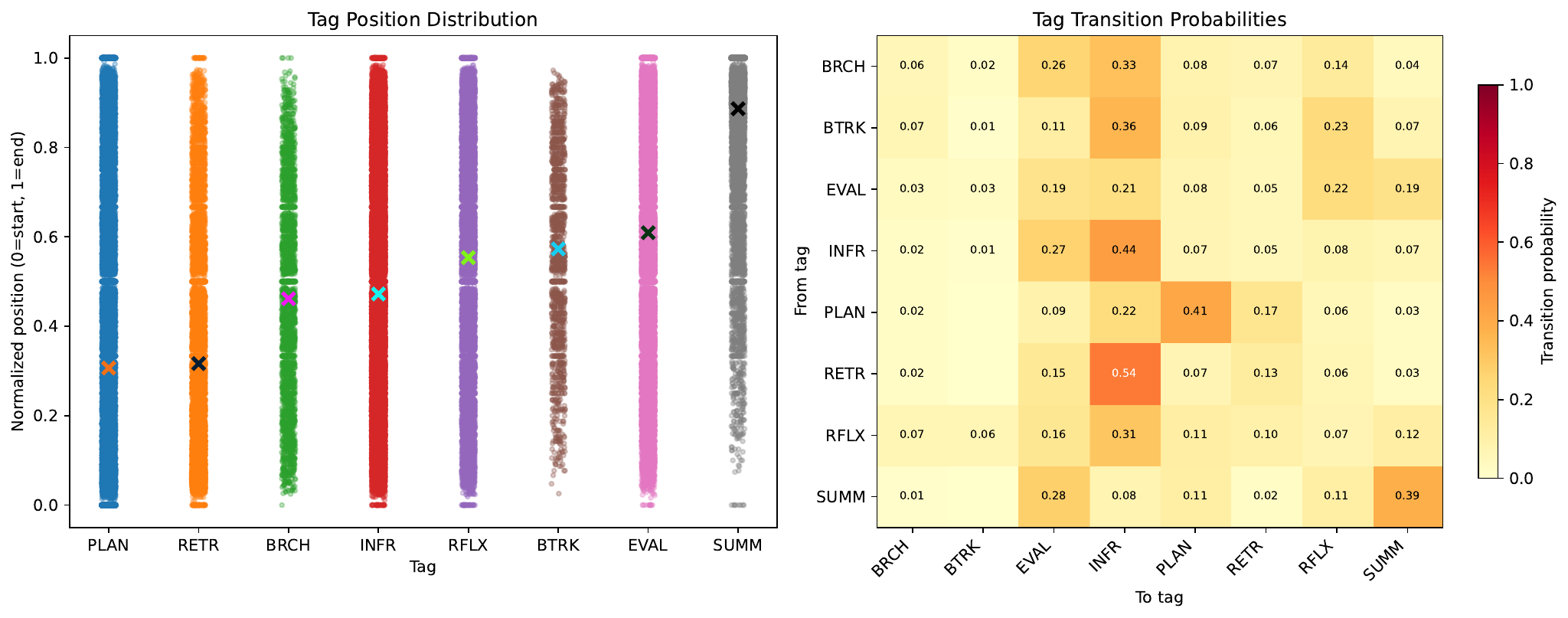}
\caption{Functional-tag positions and transition probabilities. SSR skeletons exhibit stage-specific tag placement and nontrivial sequential regularities, supporting the claim that the scaffold is structured but non-rigid.}
\label{fig:skeleton_position_transition}
\end{figure*}

We also compare first-round skeletons with refined output skeletons. Refined skeletons preserve the same broad positional ordering while adjusting step counts and tag locations (\cref{fig:skeleton_round_compare}), suggesting that the second pass refines structure rather than collapsing to a single template.

\begin{figure*}[t]
\centering
\includegraphics[width=0.86\textwidth]{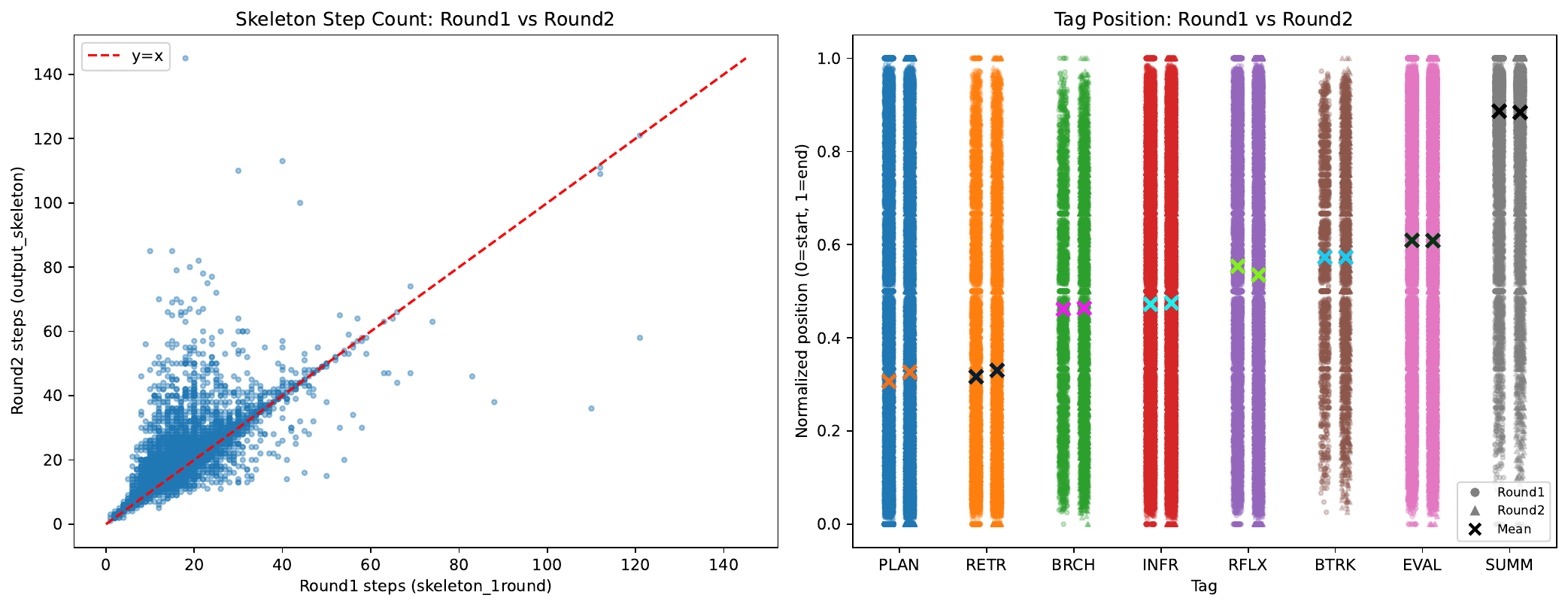}
\caption{Round-1 versus refined skeleton comparison. The refined skeletons preserve the broad structure of the first-pass plans while adjusting step counts and tag positions.}
\label{fig:skeleton_round_compare}
\end{figure*}

\section{Data and Evaluation Setup}
\label{app:data_eval_setup}

\subsection{Reference Answer Construction}
\label{app:data_construction}

We construct our experiment dataset using a self-improvement pipeline to simulate the situations that the high-quality responses are accessible but human-written or gold intermediate reasoning traces are lacking.

\paragraph{Data source.}
We sample user queries from the LMArena human preference corpus (140k conversations) hosted at
\url{https://huggingface.co/datasets/lmarena-ai/arena-human-preference-140k}
and introduced in prior work on the text arena \citep{chiang2024chatbot}.

\paragraph{Overview.}
For each sampled query, we use Qwen3-Max \citep{qwen3} to construct a high-quality reference answer via a multi-stage, iterative refinement pipeline. The procedure alternates between (i) generating diverse candidate responses, (ii) self-evaluating candidates along key quality dimensions, and (iii) aggregating the strongest components into improved candidates, repeating this process for multiple refinement loops.

\paragraph{Pipeline stages.}
Let $N$ denote the number of independent rollouts produced per query, $K$ the number of candidate slots maintained after aggregation, $M$ the number of evaluated candidates sampled for synthesizing each slot, and $T$ the number of refinement loops.
\begin{enumerate}[leftmargin=*, itemsep=2pt, topsep=2pt]
    \item \textbf{Candidate generation.} We prompt Qwen3-Max to produce $N$ independent response rollouts for the same query to encourage diversity in reasoning paths and content coverage.
    \item \textbf{Candidate evaluation.} The model then self-evaluates each rollout, assigning a scalar score based on accuracy, coherence, and completeness. These scores are used for quality reference during the aggregation step.
    \item \textbf{Candidate aggregation.} We construct a new set of $K$ candidates. For each slot, we randomly sample $M$ evaluated candidates and synthesize an improved response by combining their highest-scoring components (e.g., correct facts, clearer explanations, or more complete coverage).
    \item \textbf{Improvement loop.} The newly synthesized $K$ candidates are fed back into the evaluation and aggregation stages. We repeat this improvement cycle for $T$ loops, yielding a final, polished reference answer at convergence.
\end{enumerate}

\paragraph{Algorithmic description.}
Algorithm~\ref{alg:self_refine} summarizes the refinement procedure used to produce the definitive reference answer per query.
\begin{algorithm}[tb]
\caption{Iterative self-refinement for reference answer construction}
\label{alg:self_refine}
\begin{algorithmic}[1]
\REQUIRE Query $x$; model $\mathcal{M}$ (Qwen3-Max); rollouts $N$; slots $K$; sample size $M$; loops $T$
\STATE $\mathcal{C} \leftarrow \{\mathcal{M}(x)\}_{i=1}^{N}$ \hfill \COMMENT{Generate $N$ independent candidates}
\FOR{$t = 1, \ldots, T$}
    \STATE $\mathcal{S} \leftarrow \{\textsc{Score}(c)\;|\;c \in \mathcal{C}\}$ \hfill \COMMENT{Self-evaluate on accuracy/coherence/completeness}
    \STATE $\mathcal{C}' \leftarrow \emptyset$
    \FOR{$j = 1, \ldots, K$}
        \STATE Sample $\{c_{j,1},\dots,c_{j,M}\}$ from $\mathcal{C}$ (optionally biased by $\mathcal{S}$)
        \STATE $\mathcal{S}_{j} \leftarrow \{\textsc{Score}(c_{j,m})\}_{m=1}^{M}$ \hfill \COMMENT{Use only scores of sampled candidates}
        \STATE $c'_j \leftarrow \textsc{Synthesize}(\{c_{j,1},\dots,c_{j,M}\}, \mathcal{S}_{j})$
        \STATE $\mathcal{C}' \leftarrow \mathcal{C}' \cup \{c'_j\}$
    \ENDFOR
    \STATE $\mathcal{C} \leftarrow \mathcal{C}'$ \hfill \COMMENT{Update candidates for next loop}
\ENDFOR
\STATE \textbf{return} $\arg\max_{c \in \mathcal{C}} \textsc{Score}(c)$ \hfill \COMMENT{Final reference answer}
\end{algorithmic}
\end{algorithm}

\paragraph{Rationale.}
Our construction yields reference responses that reflect the outcome of substantial deliberation and iterative error-correction, while keeping intermediate reasoning implicit. As a result, the final answer provides a strong target without exposing step-by-step traces that could be trivially copied. This design matches realistic deployment conditions, where systems are typically evaluated on inputs and final outputs rather than on access to the full internal decision process. It also creates a controlled setting in which a model must justify or explain an already-produced outcome using only the surface form of the response, which is exactly where post-hoc rationalization is most likely to arise and most difficult to detect.

\subsection{Evaluation Tasks}
\label{app:benchmarks}

\subsubsection{In-domain open-ended reasoning benchmarks}

\begin{description}
    \item[ArenaHard-v2.0 (Human Preference)] \hfill \\
    An automatic evaluation tool for instruction-tuned LLMs designed to simulate the ``Chatbot Arena'' environment. It boasts the highest correlation and separability to human-preference benchmarks (LMArena) among popular open-ended benchmarks. It assesses the model's ability to handle complex, open-ended inquiries using automatic judges (e.g., GPT-4, Gemini) as approximators for human preference. \\
    \textbf{Statistics:} The V2.0 dataset contains 500 fresh, challenging real-world user queries covering topics like software engineering and mathematics, alongside 250 creative writing queries sourced from Chatbot Arena. \\
    \textbf{Source:} \url{https://github.com/lmarena/arena-hard-auto}

    \item[EQ-Bench 3 (Emotional Intelligence)] \hfill \\
    A multi-turn benchmark assessing active emotional intelligence skills, including empathy, social dexterity, psychological insight, and analytical depth. Unlike knowledge-based tests, it places models in role-play scenarios (e.g., conflict mediation, relationship drama) or analysis tasks to test their ability to reason about human emotions. \\
    \textbf{Statistics:} Evaluation utilizes two primary methods: Rubric Scoring, where a judge model (default: Claude Sonnet 3.7) assigns a multi-criteria score from $0$ to $100$, and Pairwise ELO Analysis, which ranks models via head-to-head comparisons. \\
    \textbf{Source:} \url{https://github.com/EQ-bench/eqbench3}

    \item[IFEval (Instruction Following)] \hfill \\
    IFEval evaluates instruction-following using programmatically verifiable constraints (e.g., required keywords, length constraints, formatting rules), enabling reproducible automatic checking. \\
    \textbf{Statistics:} Contains 541 prompts across 25 distinct instruction types. \\
    \textbf{Source:} \url{https://github.com/google-research/google-research/tree/master/instruction_following_eval}
    
    \item[MultiChallenge (Multi-turn conversations)] \hfill \\
    A benchmark designed to evaluate multi-turn instruction-following capabilities. It tests whether models can maintain constraints, recall information, and edit responses over the course of a long conversation, covering four specific categories: Inference Memory, Instruction Retention, Reliable Version Editing, and Self-Coherence. \\
    \textbf{Statistics:} The dataset consists of 273 test conversations with an average of 5 turns and 1231.7 words per conversation. The breakdown is as follows:
    \begin{itemize}
        \item Inference Memory: 113 conversations
        \item Instruction Retention: 69 conversations
        \item Reliable Version Editing: 41 conversations
        \item Self-Coherence: 50 conversations
    \end{itemize} 
    \textbf{Source:} \url{https://github.com/ekwinox117/multi-challenge}

\end{description}

\subsubsection{Out-of-domain (OOD) reasoning benchmarks}

\begin{description}
    \item[GPQA-D (Science)] \hfill \\
    A graduate-level, multiple-choice QA benchmark written by domain experts in biology, chemistry, and physics. We use the ``Diamond'' subset, designed to be ``Google-proof'' and challenging even for experts. \\
    \textbf{Statistics:} 198 questions utilized for OOD scientific evaluation. \\
    \textbf{Source:} \url{https://huggingface.co/datasets/Idavidrein/gpqa}

    \item[AIME 2025 (Mathematics)] \hfill \\
    Tests Olympiad-style mathematical reasoning using problems from the 2025 American Invitational Mathematics Examination (AIME I \& II). \\
    \textbf{Statistics:} 30 problems requiring exact-match integer answers. \\
    \textbf{Source:} \url{https://huggingface.co/datasets/math-ai/aime25}
\end{description}

\section{Training and Distillation Details}
\label{app:training}

\subsection{Training Hyperparameters}

We fine-tune the series of Qwen3 and NBG4 models using OpenRLHF \citep{hu2024openrlhf}, and the hyperparameters are presented in \cref{tab:train_params}. Other model sizes follow the same training recipe unless constrained by memory, in which case we adjust tensor/data parallelism while keeping the effective batch size and optimization settings unchanged.

\begin{table}[t]
\caption{Training Hyperparameters for Qwen3-8B}
\label{tab:train_params}
\centering
\begin{tabular}{lc}
\toprule
\textbf{Hyperparameter} & \textbf{Value} \\
\midrule
Base Model & Qwen3-8B \\
Max Sequence Length & 32,768 \\
Global Batch Size & 256 \\
Micro Batch Size & 1 \\
Learning Rate & $4 \times 10^{-5}$ \\
LR Warmup Ratio & 0.01 \\
Weight Decay (L2) & 0.01 \\
Max Epochs & 4 \\
Optimizer Strategy & ZeRO Stage 3 \\
Precision & BF16 \\
Ring Attention Size & 4 \\
Ring Head Stride & 4 \\
Gradient Checkpointing & Enabled \\
Sample Packing & Enabled \\
Dataset Size & 100,000 \\
\bottomrule
\end{tabular}
\vskip -0.1in
\end{table}

\subsection{SSR-D: Details of Teacher Training and Student Distillation}
\label{app:train_teacher}

SSR-D uses a two-stage teacher-student pipeline to internalize the SSR format. We first construct SSR seed examples and train an SSR-format teacher model. The fine-tuned teacher then generates the final distilled traces used to train the target student models.

For each query-answer pair $(Q,A)$, an SSR trace consists of a structural skeleton $S^*$ and a full reverse-CoT reasoning trace $R^*$. The student is trained with two supervised objectives:
\begin{enumerate}[leftmargin=*, itemsep=2pt]
    \item \textbf{Skeleton Generation:} The model first predicts the teacher skeleton from the query and answer:
    \[
        \mathcal{L}_S = -\log p_\theta(S^* \mid Q, A)
    \]
    
    \item \textbf{Reasoning Reconstruction:} The model then reconstructs the teacher reasoning trace conditioned on the skeleton:
    \[
        \mathcal{L}_R = -\log p_\theta(R^* \mid Q, A, S^*)
    \]
\end{enumerate}

The combined objective:
\[
    \mathcal{L} = \mathcal{L}_S + \mathcal{L}_R
\]
trains the student to preserve the SSR ordering, first planning through a skeleton and then realizing the full reasoning trace from that skeleton.

The teacher pipeline proceeds in three steps. First, Qwen3-Max produces temporary reference reasoning traces for the sampled $(Q,A)$ pairs. These traces are used only to construct the SSR teacher and are not the final traces used for student distillation. Second, Qwen3-235B-Instruct-2507 segments each trace into logical steps, assigns functional tags from our restricted vocabulary, and rewrites each step into a content-invariant skeleton summary. This produces seed tuples $(Q,A,S^*,R^*)$; the skeleton-generation prompt is provided in Appendix~\ref{app:prompt}. Third, we fine-tune Qwen3-235B-Instruct-2507 with OpenRLHF on these SSR seed tuples to obtain the SSR teacher.

The fine-tuned SSR teacher then generates 100k final SSR-D traces on another 100k query-answer pairs for student training. These teacher-generated traces, rather than the raw seed annotations alone, are used to distill the target models evaluated as SSR-D.



\section{Additional Experimental Results}
\label{app:additional_results}

\subsection{Ablation Studies}
\label{app:ablation}

We conduct systematic ablations to quantify the contribution of each SSR component and understand scaling behavior.

\subsubsection{Component Ablation}
\label{sec:component_ablation}

\begin{table}[t]
    \caption{Detailed component ablation on Qwen3-8B.}
    \label{tab:component_ablation}
    \centering
    \begin{tabular}{lccc}
        \toprule
        \textbf{Configuration} & \textbf{ArenaHard} & \textbf{GPQA-D} \\
        \midrule
        SSR-D             & 59.5 & 56.6 \\
        $-$ two-phase generation  & 52.2 & 51.6 \\
        $-$ Functional tags       & 55.8 & 54.2 \\
        $-$ Content skeletons     & 53.4 & 52.8 \\
        NEU Baseline              & 50.8 & 49.5 \\
        \bottomrule
    \end{tabular}
    \vskip -0.1in
\end{table}

We ablate the main components of SSR in \cref{tab:component_ablation}.

\paragraph{Two-phase Generation.}
Removing the two-phase protocol (skeleton $\rightarrow$ reasoning) and instead generating skeletons interleaved with reasoning produces the largest degradation ($-7.0$ ArenaHard). This confirms that explicit separation of structural planning from reasoning execution is essential; the skeleton must be complete before reasoning begins to provide effective guidance and satisfy the functional invariance property (Definition~\ref{def:func_inv}).

\paragraph{Functional Tags vs.\ Content Skeletons.}
Both components contribute substantially, but content skeletons have a larger impact on task performance ($-5.8$ ArenaHard when removed vs.\ $-3.4$ for tags). This suggests complementary roles: tags provide coarse structural scaffolding that constrains reasoning topology, while content skeletons provide fine-grained guidance that improves reasoning quality.

\subsubsection{Teacher Model Scaling}
\label{sec:teacher_scaling}

We examine how the fine-tuned SSR teacher affects student performance for distilled SSR (SSR-D).

\begin{table}[t]
    \caption{Effects of fine-tuned SSR teacher choice on student performance (Student: Qwen3-8B).}
    \label{tab:teacher_scaling}
    \centering
    \begin{tabular}{lccc}
        \toprule
        \textbf{Teacher} & \textbf{ArenaHard} & \textbf{GPQA-D} & \textbf{AIME '25} \\
        \midrule
        Qwen3-32B   & 57.7 & 54.8 & 42.3\\
        \makecell[l]{Qwen3-235B-\\Instruct-2507}  & \textbf{59.5} & 56.6 & 44.2 \\
        NBG-3.5-Pro\footnotemark  & 58.9 & \textbf{57.8} & \textbf{45.0} \\
        \bottomrule
    \end{tabular}
    \vskip -0.1in
\end{table}
\footnotetext{\url{https://www.nanbeige.com/portal}}

The fine-tuned Qwen3-235B-Instruct-2507 SSR teacher is our default generator for SSR-D traces. Teacher choice affects the downstream profile (\cref{tab:teacher_scaling}): Qwen3-235B-Instruct-2507 yields the best ArenaHard result, while NBG-3.5-Pro gives stronger GPQA-D and AIME 2025 results. Both stronger teachers outperform the Qwen3-32B teacher, indicating that SSR-D benefits from higher-quality SSR-format supervision.

\subsubsection{Structured Rendering Ablation}
\label{sec:json_naturalness_ablation}

We also ablate how structure is imposed at generation time. A schema-rendered route asks the model to emit typed step fields, validates them, and deterministically renders them into the same \texttt{<skeleton>} and \texttt{<reason>} interface. This removes many surface-format failures, but it also changes the target distribution. In contrast, the final SSR prompt uses natural-language structural guidance: the model is asked to produce a structured skeleton and a derivational reasoning trace directly, so structure is induced without forcing each paragraph through a schema field.

\begin{table}[t]
    \caption{Ablation of schema-rendered versus natural-language structural guidance. Anchoring columns use the reference-normalized display scale from Appendix~\ref{app:display_score_normalization}; lower is better for anchoring metrics and template rigidity.}
    \label{tab:json_naturalness_ablation}
    \centering
    \begin{tabular}{lrrrrrrr}
        \toprule
        \textbf{Variant} & $\mathcal{A}_{\mathrm{lex}}^{\mathrm{disp}}$ & $\mathcal{A}_{\mathrm{traj}}^{\mathrm{disp}}$ & $\mathcal{A}_{\mathrm{prob}}^{\mathrm{disp}}$ & \textbf{Avg. Tokens} & \textbf{Paragraphs} & \textbf{One-Sent. Para.} & \textbf{Rigidity} \\
        \midrule
        Schema-rendered SSR & 26.5 & 36.5 & 40.0 & 294.1 & 8.83 & 54.2\% & 1.170 \\
        Natural-language SSR & 35.6 & 31.7 & 37.0 & 331.9 & 5.10 & 2.3\% & 0.000 \\
        Natural-language control & 34.9 & 30.8 & 37.6 & 334.0 & 5.10 & 2.5\% & -0.003 \\
        \bottomrule
    \end{tabular}
    \vskip -0.1in
\end{table}

The schema-rendered variant improves lexical anchoring, but it worsens both trajectory anchoring and probabilistic anchoring relative to the natural-language structured baselines (\cref{tab:json_naturalness_ablation}). More importantly, the rendered traces become visibly template-like: they contain many more paragraphs (8.83 vs.\ 5.10) and a much larger fraction of one-sentence paragraphs (54.2\% vs.\ about 2--3\%). This indicates that parser-level structure is not equivalent to natural RCoT structure. The renderer removes schema syntax, but it cannot remove the field-aligned generation habit induced by the schema: each local field tends to justify a single action rather than participate in a continuous derivational trace. We therefore use natural-language structural guidance in SSR. It preserves the intended skeleton pressure while avoiding a rigid step-justification style that would be a poor target for downstream finetuning.

\subsubsection{Dense Skeleton-Realization Ablation}
\label{sec:indexed_dense_diagnostic}

We further ablate how tightly the final trace should realize the skeleton. A dense realization variant asks each reasoning paragraph to correspond to a skeleton step and encourages each paragraph to develop a local uncertainty, criterion, evidence need, or constraint before narrowing to the next task move. It also favors task-object wording in closing steps, so checks and refinements are attached to concrete objects such as facts, calculations, code paths, recommendations, boundaries, or artifacts rather than generic references to the answer or response.

\begin{table}[t]
    \caption{Core SSR versus dense skeleton realization on Qwen3-4B. Anchoring columns use the reference-normalized display scale from Appendix~\ref{app:display_score_normalization}; lower is better. \textbf{Simple$>$300} is the share of simple-surface tasks whose traces exceed 300 tokens.}
    \label{tab:indexed_dense_diagnostic}
    \centering
    \begin{tabular}{lrrrrrrr}
        \toprule
        \textbf{Variant} & $\mathcal{A}_{\mathrm{lex}}^{d}$ $\downarrow$ & $\mathcal{A}_{\mathrm{traj}}^{d}$ $\downarrow$ & $\mathcal{A}_{\mathrm{prob}}^{d}$ $\downarrow$ & \textbf{Tok.} & \textbf{Endpt.} $\uparrow$ & \textbf{Lang.} $\uparrow$ & \textbf{Simple$>$300} $\downarrow$ \\
        \midrule
        Core SSR & 30.3 & 33.3 & 40.8 & 356.3 & 0.989 & 0.713 & 45.3\% \\
        Dense & 28.9 & 27.8 & 44.1 & 349.2 & 0.998 & 0.688 & 67.2\% \\
        \bottomrule
    \end{tabular}
    \vskip -0.1in
\end{table}

Dense realization addresses several quality issues that the core prompt does not fully eliminate. First, the explicit skeleton-to-paragraph mapping makes the skeleton less likely to become a detached outline, since each line has a corresponding local reasoning paragraph. Second, the denser paragraph rule tends to make traces more exploratory and task-internal rather than compressed summaries of the final answer. Third, object-centered wording reduces detached meta-comments about whether a ``response'' or ``answer'' is aligned, and pushes checks toward the actual task object. These changes improve two anchoring diagnostics and the abstract endpoint guardrail: lexical display anchoring drops from 30.3 to 28.9, trajectory display anchoring drops from 33.3 to 27.8, and abstract endpoint alignment rises from 0.989 to 0.998 (\cref{tab:indexed_dense_diagnostic}).

The same changes introduce trade-offs. Dense realization raises probabilistic display anchoring from 40.8 to 44.1, suggesting that the more concrete and endpoint-stable traces carry more answer-recoverable information. It also increases the share of simple-surface tasks with long traces (45.3\% to 67.2\%) and slightly worsens language matching (0.713 to 0.688), mostly by drifting non-English prompts toward English. We therefore treat dense realization as a quality-oriented ablation rather than the main SSR method: it clarifies which remaining failure modes can be improved by tighter skeleton realization, while also showing that additional endpoint and density constraints can reintroduce answer information and distributional artifacts.

\subsection{Qualitative Case Study}
\label{app:case_study}

We further inspect a two-step conversation: the first turn asks why teenage girls become fixated on women footballers' private lives, and the follow-up narrows the question to ``shipping.'' As shown in \cref{fig:case_study_boxes}, SSR and SSR-D both receive 5.0 on self-contained derivation, compared with 3.5 for SUP and 3.0 for NEU. The most diagnostic fragments are shown below:

\begin{figure*}[t]
\centering
\begin{minipage}[t]{0.46\textwidth}
\begin{tcolorbox}[
    enhanced,
    title={NEU \hfill Self-contained derivation: 3.0/5},
    colback=Mycolor1!38,
    colframe=Mycolor1!85!black,
    coltitle=black,
    fonttitle=\bfseries\small,
    opacityback=0.62,
    boxrule=0.7pt,
    arc=4mm,
    height=2.1cm,
    valign=center,
    left=2mm,right=2mm,top=1.5mm,bottom=1.5mm,
    drop fuzzy shadow,
]
``Identify Key Themes from the Provided Response ... This thought process aligns perfectly with the structure and content of the assistant's provided response''
\end{tcolorbox}
\end{minipage}
\hfill
\begin{minipage}[t]{0.46\textwidth}
\begin{tcolorbox}[
    enhanced,
    title={SUP \hfill Self-contained derivation: 3.5/5},
    colback=yellow!36!Mycolor1,
    colframe=orange!75!black,
    coltitle=black,
    fonttitle=\bfseries\small,
    opacityback=0.62,
    boxrule=0.7pt,
    arc=4mm,
    height=2.1cm,
    valign=center,
    left=2mm,right=2mm,top=1.5mm,bottom=1.5mm,
    drop fuzzy shadow,
]
``Determine the underlying motivation ... the key differentiator is the high visibility of LGBTQ+ relationships ...''
\end{tcolorbox}
\end{minipage}

\vspace{2mm}

\begin{minipage}[t]{0.46\textwidth}
\begin{tcolorbox}[
    enhanced,
    title={SSR \hfill Self-contained derivation: 5.0/5},
    colback=Mycolor3!42,
    colframe=Mycolor3!70!black,
    coltitle=black,
    fonttitle=\bfseries\small,
    opacityback=0.62,
    boxrule=0.7pt,
    arc=4mm,
    height=2.1cm,
    valign=center,
    left=2mm,right=2mm,top=1.5mm,bottom=1.5mm,
    drop fuzzy shadow,
]
``deeper dive into a specific behavior ... moving from general fascination to the specific mechanics of romantic projection''
\end{tcolorbox}
\end{minipage}
\hfill
\begin{minipage}[t]{0.46\textwidth}
\begin{tcolorbox}[
    enhanced,
    title={SSR-D \hfill Self-contained derivation: 5.0/5},
    colback=Mycolor2!34!Mycolor3,
    colframe=Mycolor2!70!black,
    coltitle=black,
    fonttitle=\bfseries\small,
    opacityback=0.62,
    boxrule=0.7pt,
    arc=4mm,
    height=2.1cm,
    valign=center,
    left=2mm,right=2mm,top=1.5mm,bottom=1.5mm,
    drop fuzzy shadow,
]
``I need to recall the previous context ... Now, the user is zooming in on the shipping aspect specifically''
\end{tcolorbox}
\end{minipage}
\caption{Qualitative comparison of self-contained derivation scores. Box color shifts from lower-scoring red/orange traces to higher-scoring green/blue traces.}
\label{fig:case_study_boxes}
\end{figure*}

The SSR and SSR-D traces read as context-grounded derivations from the prior conversation, while SUP and NEU more directly organize the explanation around preselected conclusions.

\section{Prompt Templates}
\label{app:prompt}

For the suppression baselines (\textbf{NEU, SUP, AUG-SUP}), we adapt the abductive reasoning prompt from \citet{sakana2025rlt} by incorporating semantic suppression constraints (highlighted in bold type). This ensures the model retains the capability to generate relatively high-quality reasoning traces.

\tcbset{
    promptstyle/.style={
        breakable,
        enhanced,
        colback=white,
        colframe=black!70,
        coltitle=white,
        colbacktitle=black!70,
        fonttitle=\bfseries\ttfamily,
        fontupper=\small\ttfamily,
        boxrule=1pt,
        arc=2mm,
        left=3mm, right=3mm, top=3mm, bottom=3mm,
        title after break={ (Continued)}
    }
}

\begin{tcolorbox}[promptstyle, title=Neutral RCG Prompt (NEU)]
Your role as an assistant involves providing precise and accurate solutions before providing detailed explanations with your full work showing your systematic thinking process leading to each solution. Your explanations should show how you engaged in a comprehensive cycle of analysis, summarizing, exploration, reassessment, reflection, backtracing, and iteration to develop well-considered thinking process. Please structure your response into two main sections: Solution and Explanation. In the Solution section, present your well-thought solution that accurately answers the question. The solution should remain a logical, accurate, concise expression style and detail necessary step needed to reach the conclusion, formatted as follows: \textless|begin\_of\_solution|\textgreater\ \{final formatted, precise, and clear solution\} \textless|end\_of\_solution|\textgreater. In the Explanation section, comprehensively detail your reasoning process from the question to your solutions using the specified format: \textless|begin\_of\_explanation|\textgreater\ \{explanation with steps separated with '\textbackslash n\textbackslash n'\} \textless|end\_of\_explanation|\textgreater\ Each step should show logical connections and detailed considerations leading to your solutions such as analyzing questions, summarizing relevant findings, brainstorming new ideas, verifying the accuracy of the current steps, refining any errors, and revisiting previous steps.
\end{tcolorbox}

\vskip 0.1in

\begin{tcolorbox}[promptstyle, title=Suppression Prompt (SUP)]
Your role as an assistant involves providing precise and accurate solutions before providing detailed explanations with your full work showing your systematic thinking process leading to each solution. Your explanations should show how you engaged in a comprehensive cycle of analysis, summarizing, exploration, reassessment, reflection, backtracing, and iteration to develop well-considered thinking process. Please structure your response into two main sections: Solution and Explanation. In the Solution section, present your well-thought solution that accurately answers the question. The solution should remain a logical, accurate, concise expression style and detail necessary step needed to reach the conclusion, formatted as follows: \textless|begin\_of\_solution|\textgreater\ \{final formatted, precise, and clear solution\} \textless|end\_of\_solution|\textgreater. In the Explanation section, comprehensively detail your reasoning process using the specified format: \textless|begin\_of\_explanation|\textgreater\ \{explanation with steps separated with '\textbackslash n\textbackslash n'\} \textless|end\_of\_explanation|\textgreater\ Each step should show detailed considerations leading to your solutions such as analyzing questions, summarizing relevant findings, brainstorming new ideas, verifying the accuracy of the current steps, refining any errors, and revisiting previous steps. \textbf{**DO NOT** explicitly output or hint at any information of Solution section in the Explanation section. **DO NOT** explicitly output or hint at any information of Solution section in the Explanation section. **DO NOT** explicitly output or hint at any information of Solution section in the Explanation section.}
\end{tcolorbox}


\begin{tcolorbox}[promptstyle, title=Augmented Suppression Prompt (AUG-SUP)]
Your role as an assistant involves providing precise and accurate solutions before providing detailed explanations with your full work showing your systematic thinking process leading to each solution. Your explanations should show how you engaged in a comprehensive cycle of analysis, summarizing, exploration, reassessment, reflection, backtracing, and iteration to develop well-considered thinking process. Please structure your response into two main sections: Solution and Explanation. In the Solution section, present your well-thought solution that accurately answers the question. The solution should remain a logical, accurate, concise expression style and detail necessary step needed to reach the conclusion, formatted as follows: \textless|begin\_of\_solution|\textgreater\ \{final formatted, precise, and clear solution\} \textless|end\_of\_solution|\textgreater. In the Explanation section, comprehensively detail your reasoning process using the specified format: \textless|begin\_of\_explanation|\textgreater\ \{explanation with steps separated with '\textbackslash n\textbackslash n'\} \textless|end\_of\_explanation|\textgreater\ Each step should show detailed considerations leading to your solutions such as analyzing questions, summarizing relevant findings, brainstorming new ideas, verifying the accuracy of the current steps, refining any errors, and revisiting previous steps. \textbf{**DO NOT** explicitly output any information of Solution section in the Explanation section. **DO NOT** explicitly output any information of Solution section in the Explanation section. **DO NOT** explicitly output any information of Solution section in the Explanation section. **PROHIBITION**: When outputting your Explanation, you are strictly forbidden from displaying ANY discernible signs that you have peeked at the Solution. **DO NOT** explicitly output any information of Solution section in the Explanation section. Should ANY form of Solution leakage occur, you will be severely punished by the Almighty Ruler.}
\end{tcolorbox}


\begin{tcolorbox}[promptstyle, title=Structural Skeleton-guided Reasoning Prompt (SSR)]
You are an expert AI assistant reconstructing a structured reasoning trace for a final assistant turn in a dialogue.

Output exactly two blocks, in this order: \textasciigrave{}\textless{}skeleton\textgreater{}\textasciigrave{} and \textasciigrave{}\textless{}reason\textgreater{}\textasciigrave{}. The already-provided assistant turn is context; return the reasoning trace only.

After closing \textasciigrave{}\textless{}/skeleton\textgreater{}\textasciigrave{}, the very next nonblank line must be \textasciigrave{}\textless{}reason\textgreater{}\textasciigrave{}. Never write \textasciigrave{}\textless{}/reason\textgreater{}\textasciigrave{} before writing \textasciigrave{}\textless{}reason\textgreater{}\textasciigrave{}.

The trace should be moderately developed: more informative than a terse outline, less expansive than a long chain of thought. Add depth where the task or the importance of a judgment calls for it, while keeping the trace derivational rather than a surface restatement. Use deeper reflection dynamically for objectively hard steps and for subjectively central steps where the response value depends on a key conclusion, audience need, method choice, uncertainty boundary, implementation tradeoff, or paper-style claim.

\#\#\# Tags

Use these tags only:

- [PLAN]: understand the request, constraints, and answer shape.
- [RETR]: bring in needed facts, context, or remembered information.
- [INFR]: infer, calculate, transform, or connect task details.
- [EVAL]: check a risky assumption, calculation, source, or consistency point.
- [SUMM]: organize the constraints, checks, and response plan.
- [BRCH]: compare plausible approaches before selecting one.
- [RFLX]: pause on an important judgment and name what keeps it disciplined.
- [BTRK]: revise a concrete earlier path after a check fails.

\#\#\# Skeleton Rules

1. Put one numbered line per step inside \textasciigrave{}\textless{}skeleton\textgreater{}\textasciigrave{}.
2. After the number, write one real tag from the tag list, then \textasciigrave{}[HIGH]\textasciigrave{} or \textasciigrave{}[LOW]\textasciigrave{} with no space between them, then a short task-specific action sentence.
3. Use 6-8 steps for simple tasks, 8-10 for ordinary tasks, and 10-12 for genuinely multi-part, technical, mathematical, coding, policy, safety, scientific, or long-form tasks.
4. Keep each line short and grounded in the user's request; do not use placeholders, examples, or template words.
5. Do not put explanatory second sentences in skeleton lines; put all extra criteria, caveats, and checks in the matching reason paragraph.
6. Mark at least half the lines \textasciigrave{}[HIGH]\textasciigrave{}.
7. Add diagnostic steps only when they are earned by real difficulty: ambiguity, competing approaches, missing information, verification risk, or a failed path.
8. Add importance steps when a judgment deserves extra care because it affects a main claim, recommendation, method choice, numerical interpretation, safety or risk tradeoff, paper-style conclusion, or final prioritization.
9. Also add information-density steps when the value of the response depends on interpreting user intent, weighing audience impact, selecting a framing for a conclusion, or separating a robust claim from a tempting but overconfident one.
10. For nontrivial tasks, include at least one diagnostic step, importance step, or information-density step, and usually no more than three total.
11. Do not repeat the same tag or sentence frame three times in a row.
12. In skeleton lines, name the reasoning operation, check, selection criterion, or response shape rather than reusing the assistant turn's surface wording.

\#\#\# Reason Rules

1. Write one paragraph for each skeleton line, in the same order, with exactly one blank line between paragraphs.
2. Each paragraph should be compact but informative. Use one sentence for plain setup or wrap-up; use two sentences for most \textasciigrave{}[HIGH]\textasciigrave{}, comparison, check, reflection, technical, numerical, safety, or recommendation steps.
3. Reconstruct the reasoning process as a derivational trace. Use task-level categories, ordinary terms, and any task-relevant terms needed for faithful reasoning; make the connection through criteria, checks, and transitions rather than surface restatement.
4. Build toward the response stance progressively. Earlier paragraphs should develop criteria, constraints, alternatives, and checks; the final paragraphs may integrate the conclusion direction when it follows from the trace.
5. For \textasciigrave{}[EVAL]\textasciigrave{}, \textasciigrave{}[BRCH]\textasciigrave{}, \textasciigrave{}[RFLX]\textasciigrave{}, and \textasciigrave{}[BTRK]\textasciigrave{}, name the actual uncertainty, tradeoff, evidence standard, or correction. Avoid generic claims that something is simply correct.
6. For importance-driven or information-density-driven depth, explain why the point deserves extra care and what evidence, constraint, consequence, audience need, or uncertainty boundary keeps it honest.
7. When adding depth, discuss the evidence standard, boundary condition, stakeholder need, format constraint, uncertainty, or failure mode before narrowing the response direction.
8. When the assistant turn contains a strong claim, concrete recommendation, named method, or exact wording, ground it by articulating the underlying criteria and transition so the trace reads as derivation rather than quotation.
9. For ordinary or harder tasks, several paragraphs should contain a second sentence that records a constraint, caveat, comparison, verification target, or reflective check. Do not add such a sentence when it would merely pad an obvious step.
10. Keep reasoning task-specific, but avoid broad background, extra examples, and meta-commentary about this protocol.
11. Do not number paragraphs, copy skeleton tags, or output anything outside the two required blocks.
12. The reasoning paragraphs must be between \textasciigrave{}\textless{}reason\textgreater{}\textasciigrave{} and \textasciigrave{}\textless{}/reason\textgreater{}\textasciigrave{}, not before the opening tag or after the closing tag.
13. Always write the final closing tag \textasciigrave{}\textless{}/reason\textgreater{}\textasciigrave{} after the last reasoning paragraph.
14. Close \textasciigrave{}\textless{}/skeleton\textgreater{}\textasciigrave{} immediately after the final skeleton line. Then write \textasciigrave{}\textless{}reason\textgreater{}\textasciigrave{}, the reasoning paragraphs, and finally \textasciigrave{}\textless{}/reason\textgreater{}\textasciigrave{}.
\end{tcolorbox}

\vskip 0.1in

\begin{tcolorbox}[promptstyle, title=Skeleton Generation Prompt (SS-GEN)]
You are an expert AI assistant converting an existing reasoning trace into an SSR structural skeleton.

Your job is to analyze the provided INPUT, segment it into faithful reasoning operations, and output only a concise SSR skeleton. Do not write new reasoning, conclusions, titles, or explanations.

\#\#\# Tags

Use these tags only:

- [PLAN]: understand the request, constraints, and answer shape.
- [RETR]: bring in needed facts, context, or remembered information.
- [INFR]: infer, calculate, transform, or connect task details.
- [EVAL]: check a risky assumption, calculation, source, or consistency point.
- [SUMM]: organize the constraints, checks, and response plan.
- [BRCH]: compare plausible approaches before selecting one.
- [RFLX]: pause on an important judgment and name what keeps it disciplined.
- [BTRK]: revise a concrete earlier path after a check fails.

\#\#\# Skeleton Rules

1. Segment the INPUT into coherent operations that actually appear in the trace, preserving the original logical order.
2. Output one numbered line per operation using exactly this format: \textasciigrave{}n. [TAG][HIGH/LOW] \textless{}short action sentence\textgreater{}\textasciigrave{}.
3. Choose one real tag from the tag list, then write \textasciigrave{}[HIGH]\textasciigrave{} or \textasciigrave{}[LOW]\textasciigrave{} with no space between the tag and marker.
4. Use \textasciigrave{}[HIGH]\textasciigrave{} for objectively hard, risky, or information-dense operations: ambiguity, competing approaches, calculations, verification, failed paths, central judgments, method choices, uncertainty boundaries, implementation tradeoffs, or paper-style claims.
5. Use \textasciigrave{}[LOW]\textasciigrave{} for routine setup, straightforward inference, simple retrieval, local transitions, or wrap-up.
6. Keep each action sentence short, task-specific, and operation-level. Name what the step does rather than merely restating the final result it obtains.
7. Prefer 6-12 lines when the INPUT supports that range. Merge tiny moves; split dense passages when one segment contains a distinct check, branch, reflection, or correction.
8. Do not repeat the same tag or sentence frame three times in a row.
9. Use the requested output language. If it is unspecified, match the language of the INPUT trace.
10. Every skeleton line must correspond to explicit reasoning or operations in the INPUT. Do not invent hidden steps, new facts, assumptions, or conclusions.
11. Do not correct, reinterpret, or alter the factual content of the INPUT; if the INPUT contains uncertainty or errors, reflect the operation as written.
12. Output only the formatted skeleton lines, with exactly one line break between lines. Do not include markdown fences, bullets, headings, or commentary.

Output in language: \{lang\}.

\textbf{INPUT:}
\{input\_text\}
\end{tcolorbox}

\vskip 0.1in

\paragraph{Abstract endpoint judge prompt.}

\begin{tcolorbox}[promptstyle, title=Abstract Endpoint Judge Prompt (AEJ)]
\textbf{System:}

You are a strict evaluator of abstract endpoint alignment. Given a question, a reference answer, and a candidate reasoning trace, decide whether the trace is aimed at the same abstract endpoint as the reference answer. The abstract endpoint means the same user request, response act, stance, and target deliverable or artifact type. Do not require the reasoning trace to include the reference answer's specific wording, examples, bullet points, code, numbers, entities, or detailed final-answer content; missing those details is not an error by itself. Mark inconsistent only when the trace addresses a different user request or conversation turn, takes an opposite or incompatible stance, targets a different deliverable, makes an incompatible safety/refusal decision, or is so generic that no compatible endpoint can be identified. Score 5 for clearly aligned abstract endpoint, 3 for ambiguous but plausible alignment, and 1 for a different or incompatible endpoint. Output only JSON: \{"consistent": true|false, "score": 1-5, "reason": "one sentence"\}.

\textbf{User:}

[QUESTION]
\{question\}

[REFERENCE ANSWER]
\{answer\}

[CANDIDATE REASONING]
\{trace\}
\end{tcolorbox}

\end{document}